\definecolor{cvprblue}{rgb}{0.21,0.49,0.74}
\newtheorem{theorem}{Theorem}
\DeclareMathOperator*{\argmin}{arg\,min}
\theoremstyle{definition}
\newtheorem{definition}{Definition}
\title{\textit{VideoCompressa}: Data-Efficient Video Understanding via Joint Temporal Compression and Spatial Reconstruction}
\author{
  {\bf
    Shaobo Wang$^{\spadesuit}$$^{* {\text{\Letter}}}$ \quad 
    Tianle Niu$^{\clubsuit \spadesuit}$$^*$ \quad
    Runkang Yang$^{\spadesuit}$  \quad
    Deshan Liu$^{\spadesuit}$ \quad
    Xu He$^{\spadesuit}$  
    \vspace{3pt}
  } \\
  {
  \bf
    Zichen Wen$^{\spadesuit}$ \quad 
    Conghui He$^{\diamondsuit}$ \quad 
    Xuming Hu$^{\heartsuit}$ \quad 
    Linfeng Zhang
    $^{\spadesuit}$$^{\text{\Letter}}$
    \vspace{2pt}
  } \\
    {
    $^{\spadesuit}$ EPIC Lab, SJTU $\quad$
    $^{\clubsuit}$ KTH $\quad$
    $^{\diamondsuit}$ Shanghai AI Laboratory $\quad$
    $^{\heartsuit}$ HKUST $\quad$
    \vspace{2pt}
  } \\
    {
  $^*$ Equal contribution $\quad$ 
    $^{\text{\Letter}}$ Corresponding authors
    }
}
\begin{document}
\maketitle
\begin{abstract}
The scalability of video understanding models is increasingly limited by the prohibitive storage and computational costs of large-scale video datasets. While data synthesis has improved data efficiency in the image domain, its extension to video remains challenging due to pervasive temporal redundancy and complex spatiotemporal dynamics. In this work, we uncover a critical insight: the primary source of inefficiency in video datasets is not inter-sample redundancy, but intra-sample frame-level redundancy. To leverage this insight, we introduce VideoCompressa, a novel framework for video data synthesis that reframes the problem as dynamic latent compression. Specifically, VideoCompressa jointly optimizes a differentiable keyframe selector—implemented as a lightweight ConvNet with Gumbel-Softmax sampling—to identify the most informative frames, and a pretrained, frozen Variational Autoencoder (VAE) to compress these frames into compact, semantically rich latent codes. These latent representations are then fed into a compression network, enabling end-to-end backpropagation. Crucially, the keyframe selector and synthetic latent codes are co-optimized to maximize retention of task-relevant information. Experiments show that our method achieves unprecedented data efficiency: on UCF101 with ConvNets, VideoCompressa surpasses full-data training by 2.34\% points using only 0.13\% of the original data, with over 5800× speedup compared to traditional synthesis method. Moreover, when fine-tuning Qwen2.5-7B-VL on HMDB51, VideoCompressa matches full-data performance using just 0.41\% of the training data—outperforming zero-shot baseline by 10.61\%.
\end{abstract}    
\section{Introduction}
\label{sec:intro}

\begin{figure}[tb!]
    \centering
    \includegraphics[width=.99\linewidth]{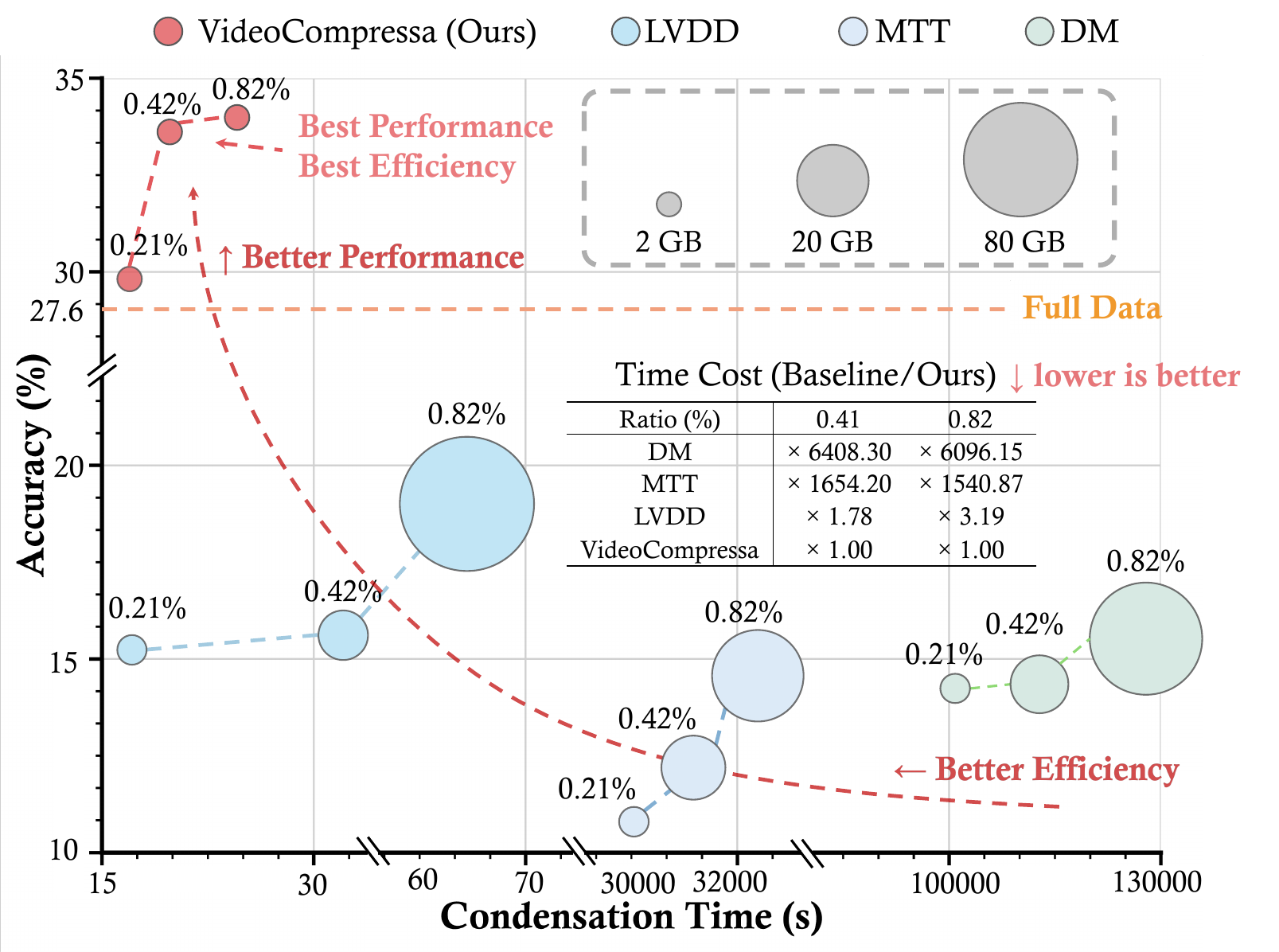} 
    \vspace{-5pt}
    \caption{
    Comparison of performance and computation efficiency across different compressing methods under multiple compression ratios on UCF101 dataset. Bubble size indicates peak GPU memory usage during compression. VideoCompressa achieves \textcolor{red!50}{\textbf{substantially better performance}} with dramatically lower computation cost, reaching \textcolor{blue!50}{\textbf{over 5800× speedup compared to prior compression methods}} such as DM, MTT, and LVDD. The dashed orange line marks full-data supervised training. 
    }
    \label{fig:efficiency}
    \vspace{-10pt}
\end{figure}

Recent advances in vision-language models (VLMs) have been largely driven by the availability of large-scale vision datasets~\citep{kay2017kinetics, arnab2021vivit, tang2025video, achiam2023gpt, wu2025qwen}. However, as the data grows exponentially, the training and storage costs associated with such datasets have become major bottlenecks~\citep{feichtenhofer2019slowfast, sorscher2022beyond, wang2024internvideo2, zhang2025survey}. This data-scaling challenge has motivated a growing field of research focused on data-efficient learning~\citep{solko2024data, sajedi2024data,kong2025multi}. A highly promising solution is dataset compression, which has achieved tremendous success in the image domain~\citep{DM, MTT, wang2018dataset, cazenavette2022dataset, lei2023comprehensive, sajedi2023datadam, sun2024diversity, wang2025dataset,min2025imagebinddc,xu2025rethinking,wang2025drupi,wang2024samples}. However, translating this success to the video domain is non-trivial and presents unique challenges~\citep{tong2022videomae, ravanbakhsh2024deep, shu2025video}. First, videos introduce a temporal dimension, which is rife with redundancy, adjacent frames are often nearly identical, naively compressing all frames is computationally infeasible and informationally wasteful~\citep{huang2018makes, lin2019tsm, ye2025re}. Second, the sheer scale of video data makes many pixel-space optimization methods, common in image distillation, intractable~\citep{wang2023dancing, zhao2024video, chen2024large}. Third, the complex spatiotemporal dynamics that define an action may be subtle and occur in short bursts, making simple heuristic-based frame selection highly suboptimal~\citep{kay2017kinetics, girdhar2019video, hutchinson2021video}. Despite the prevalence of keyframe selection~\citep{wolf1996key, mo2021keyframe, gan2023keyframe} as a technique for efficient video processing in large multimodal models~\citep{tang2025adaptive, hu2025m, yan2025enhanced}, its potential for the task of large-scale video action dataset compression still remains unexplored.

\begin{figure*}[tb!]
    \centering
    \includegraphics[width=0.99\linewidth]{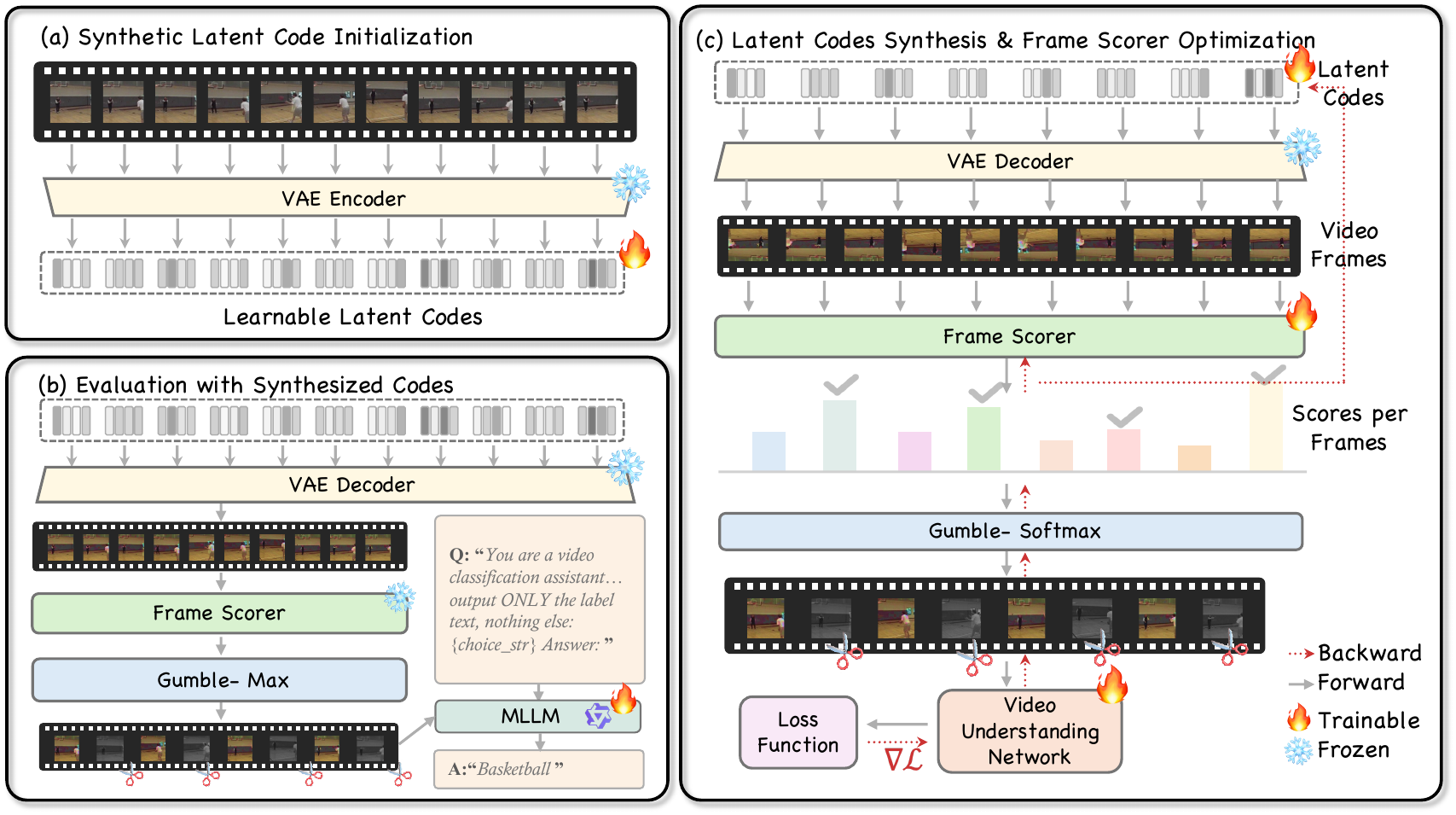}
    \caption{An overview of our proposed pipeline. A raw video is first processed by a Gumbel-based keyframe selection module. The selected keyframes are then encoded into the latent space by a frozen VAE. Finally, these latent representations are optimized via latent space synthesis, guided by the gradients from a student model, to form the synthetic dataset.}
    \label{fig:pipeline}
\end{figure*}

Prior attempts at video dataset compression also have several limitations. Coreset selection methods~\citep{guo2022deepcore, chai2023efficient, lee2024coreset, moser2025coreset} simply select a subset of real videos, which is storage-inefficient and fails to create a densified representation. Pixel-space compression methods like VDSD~\citep{wang2023dancing} and DATM~\citep{chen2024large} attempt to synthesize video pixels, but struggle with the high computational cost, limiting their scalability and compression ratios. Though IDTD~\citep{zhao2024video} jointly considers the unique within-sample and inter-sample redundancies of video, it still struggles to capture sufficient information diversity at low sample settings. More recently LVDD~\citep{li2025latent} explored latent space compression, but it typically decouple temporal selection from the latent optimization process, which limits its flexibility and generalization. 

To this end, we propose VideoCompressa, a framework that for the first time, jointly learns to select salient keyframes and synthetize a compact latent video dataset in an end-to-end manner. Our approach tackles temporal and spatial redundancy simultaneously. Instead of relying on fixed heuristics or decoupled pipelines~\citep{li2025latent}, we treat both the keyframe selection and the synthetic data content as learnable parameters of a single, unified compression objective.
Our method is built on three key ideas. First, we move the compression process from the high-dimensional pixel space to the compact latent space through a lightweight pre-trained VAE encoder~\citep{taesd}.
Second, instead of using a fixed temporal sampling heuristic, we introduce a differentiable keyframe selection module powered by the Gumbel-Softmax reparameterization~\citep {nadarajah2004beta, jang2016categorical, huijben2022review} to selects the most salient $K$ frames for visual understanding. Third, we jointly optimize the synthetic latent codes and the parameters of the keyframe selection network. 
Extensive experiments show our method achieves new state-of-the-art results, dramatically outperforming existing methods on large-scale benchmarks (as shown in Figure~\ref{fig:efficiency}) and achieving lossless compression on UCF101 dataset. Critically, we demonstrate that our highly compressed datasets can also be used to fine-tune Multimodal Large Language Model (MLLM), achieving performance remarkably close to models fine-tuned on the entire dataset. Contributions are summarized as follows:
\begin{enumerate}
    \item We identify  intra-sample frame-level redundancy, rather than inter-sample redundancy, as the principal source of inefficiency in large-scale video datasets. We argue that effective data compression must therefore prioritize principled temporal information selection over uniform sequence compression.

    \item To act on this insight, we introduce \textbf{\textit{VideoCompressa}}, a novel end-to-end framework for video data compression. Our method uniquely co-optimizes a differentiable Gumbel-Softmax keyframe selector with the reconstruction of compact latent codes, ensuring that both \textit{which} frames to select and \textit{how} to represent their content are jointly guided by the task objective.

    \item We demonstrate state-of-the-art performance and unprecedented data efficiency, achieving a state of \textbf{lossless data compression} on several datasets. For standard ConvNets on UCF101, our method surpasses full-data performance by 2.34\% using merely \textbf{0.13\%} of the original data. This remarkable efficiency generalizes to modern architectures; when fine-tuning a MLLM on HMDB51, our method matches full-data performance using just \textbf{0.41\%} of the data.
\end{enumerate}

\section{Related Work}
\label{sec:relatedwork}

\noindent \textbf{Key-Frame Selection for Video Understanding.}
Key-frame selection is a cornerstone of efficient video understanding~\citep{wolf1996key, asha2018key, mo2021keyframe, gan2023keyframe, yan2025enhanced}. The field has progressed from fast but rigid heuristics (e.g., uniform sampling, pixel differences)~\citep{asha2018key, zhang2023video, lin2024video} to more sophisticated, differentiable schemes that learn to select frames based on task relevance and temporal coverage~\citep{tang2025adaptive}. However, the primary application of these advanced selection methods has been to accelerate inference on pre-trained models~\citep{zhang2023video, lin2024video, bai2024survey}. Their potential for creating a synthetic dataset from scratch—as a core component of the data generation process itself—remains largely unexplored. Our work bridges this gap by directly integrating a differentiable key-frame selection mechanism into the dataset synthesis pipeline. This novel approach enables, for the first time, the joint optimization of \textit{which} frames to select and \textit{how} to represent their content, tackling both temporal and spatial redundancy in a unified framework.

\noindent\textbf{Efficient Dataset Synthesis.}
The quest for data efficiency has spurred the development of powerful dataset compression techniques, which create compact data surrogates that retain the performance of the full dataset, primarily in the image or language domain~\citep{wang2018dataset, lei2023comprehensive, yu2023dataset, sajedi2023datadam, sun2024diversity, wang2025dataset}. Extending this paradigm to video is complicated by immense temporal redundancy~\citep{lang1995defining, huang2018makes, tong2022videomae, shu2025video, liu2025video,liu2025shifting}. Early video-centric approaches like VDSD~\citep{wang2023dancing} and DATM~\citep{chen2024large} attempt to synthesize video pixels directly. While VDSD disentangles static and dynamic content, both methods grapple with the high computational cost and scalability issues inherent to pixel-space optimization, without explicitly learning which frames are most informative. To mitigate this, recent work like LVDD~\citep{li2025latent} has moved the synthesis process into a more efficient latent space. However, it relies on fixed, selection-centric heuristics rather than a learnable optimization process, limiting its flexibility and the quality of the resulting summary. Consequently, a critical gap remains: current methods condense video sequences but fall short of strategically selecting the most salient temporal information, leaving a rich source of efficiency untapped.

\section{Preliminary: Data-Efficient Video Understanding}
\label{sec:preliminary}
Given a video training dataset $\mathcal{D}_{\text{train}} = \{(\mathcal{V}, y)\}_{i=1}^{N}$, where $N = |\mathcal{D}_{\text{train}}|$ and each $\mathcal{V} = \{f_{i,t}\}_{t=1}^{T}$ is a sequence of $T$ frames, and a held-out test set $\mathcal{D}_{\text{test}}$ sampled from the same distribution, the goal of \textit{data-efficient video understanding} is to construct a compact, compressed representation $\mathcal{S}$ of $\mathcal{D}_{\text{train}}$ that preserves its discriminative power for video understanding tasks. This is achieved by designing an algorithm $\mathcal{A}$ that transforms $\mathcal{D}_{\text{train}}$ into a compact dataset $\mathcal{S} = \mathcal{A}(\mathcal{D}_{\text{train}})$ with size $M = |\mathcal{S}| \ll N$. Mathematically, we formalize this objective as follows:

\begin{definition}[Data-Efficient Video Understanding]
\label{def:devu}
    Let $\mathcal{M}(\cdot)$ denote a learning algorithm that maps a dataset to a model, and let $\mathfrak{M}$ be a class of admissible learning algorithms. Given a fixed compressed dataset size $M \ll N$, the goal is to design an algorithm $\mathcal{A}$ that produces $\mathcal{S} = \mathcal{A}(\mathcal{D}_{\text{train}})$ with $|\mathcal{S}| = M$, such that the worst-case performance gap on the test set is bounded:
    \begin{equation}
        \sup_{\mathcal{M} \in \mathfrak{M}} \left| \mathcal{L}(\mathcal{M}(\mathcal{S}); \mathcal{D}_{\text{test}}) - \mathcal{L}(\mathcal{M}(\mathcal{D}_{\text{train}}); \mathcal{D}_{\text{test}}) \right| \leq \delta(M),
    \end{equation}
    where $\mathcal{L}(f; \mathcal{D}_{\text{test}}) = \mathbb{E}_{(\mathcal{V},y)\sim\mathcal{D}_{\text{test}}}[\ell(f(\mathcal{V}), y)]$ is the expected test loss and $\delta(M) \geq 0$ is a tolerance that depends on $M$. The objective is to minimize $\delta(M)$ for a given $M$.
\end{definition}
\section{Methodology}
\label{sec:Methodology}

We propose \textit{VideoCompressa}, a unified framework for data-efficient video understanding that jointly optimizes temporal compression and spatial reconstruction. As illustrated in Figure~\ref{fig:pipeline}, our approach processes each input video $\mathcal{V}$ through two co-optimized stages: (1) \textit{Gumbel-based temporal compression} that dynamically selects a minimal set of informative frames, and (2) \textit{latent spatial reconstruction} that refines visual content in the VAE latent space. Crucially, both stages are differentiable and jointly trained with the downstream video understanding network, enabling end-to-end optimization of the compressed representation $\mathcal{S}$ under the data-efficiency constraint $|\mathcal{S}| = M$. The entire pipeline is trained to minimize the performance gap $\delta(M)$ defined in Section~\ref{sec:preliminary}.

\noindent \textbf{Latent Code Initialization.} Before optimization, each video $\mathcal{V} = \{f_{t}\}_{t=1}^{T}$ is encoded into an initial latent representation using a pre-trained VAE with an encoder $\mathcal{E}$ and a decoder $\psi$. Specifically, we apply the VAE encoder $\mathcal{E}$ to each frame to obtain latent codes $\mathbf{z}_{t} = \mathcal{E}(f_{t}) \in \mathbb{R}^d$, forming the initial latent sequence $\mathbf{Z} = \{\mathbf{z}_{t}\}_{t=1}^{T}$. These latent codes serve as (i) the basis for decoding pixel frames via $\psi$ that feed into temporal compression, and (ii) optimization variables for spatial reconstruction.

\subsection{Gumbel-Based Temporal Compression}
\label{subsec:temporal}

We first perform temporal compression to select key-frames within each video. This is achived by training a \textit{frame scorer} via Gumbel-Softmax technique. The frame selector operates on decoded pixel frames to capture visual semantics. For each input latent code $\mathbf{Z} = \{\mathbf{z}_{t}\}_{t=1}^{T}$, we first obtain the decoded frame sequence $\hat{\mathcal{V}} = \{\hat{f}_{t}\}_{t=1}^{T}$, where $\hat{f}_{t} = \psi(\mathbf{z}_{t})$ and $\psi$ is the frozen VAE decoder.

Specifically, a lightweight neural network $\phi$, composed of two 2D convolutional layers followed by a linear layer, is used as the frame scorer. It processes the entire decoded sequence to produce a score vector $\mathbf{q} \in \mathbb{R}^{T}$, \emph{i.e.}, $\mathbf{q} = \phi\big( \hat{f}_{1}, \dots, \hat{f}_{T} \big)$,
where each element $q_{t}$ represents the importance score of frame $t$. These logits are normalized via softmax to form selection probabilities:
\begin{equation}
    p_{t} = \frac{\exp(q_{t})}{\sum_{t'=1}^{T} \exp(q_{t'})}.
\end{equation}
To enable differentiable selection of $K$ frames ($K \ll T$), we apply the Gumbel-Softmax reparameterization \cite{jang2016categorical} with temperature $\tau = 1$. For each of $K$ independent samples $k = 1, \dots, K$, we draw Gumbel noise per frame:
\begin{equation}
    g_{t}^{(k)} = -\log(-\log(u_{t}^{(k)})), \quad u_{t}^{(k)} \sim \text{Uniform}(0,1),
\end{equation}
and compute a soft selection distribution:
\begin{equation}
    \tilde{p}_{t}^{(k)} = \frac{\exp\left( (q_{t} + g_{t}^{(k)}) / \tau \right)}{\sum_{t'=1}^{T} \exp\left( (q_{t'} + g_{t'}^{(k)}) / \tau \right)}.
\end{equation}
Each sample yields one expected frame via soft aggregation:
\begin{equation}
    \hat{f}^{(k)} = \sum_{t=1}^{T} \tilde{p}_{t}^{(k)} \cdot \hat{f}_{t}.
\end{equation}
The final compressed video is formed by concatenating the $K$ selected frames:
\begin{equation}
    \tilde{\mathcal{V}} = \left[ \hat{f}^{(1)}, \hat{f}^{(2)}, \dots, \hat{f}^{(K)} \right].
\end{equation}

\noindent \textbf{Gradient Propagation via Gumbel-Softmax.} Gradients from the classification loss flow through each $\hat{f}^{(k)}$ via the Gumbel-Softmax path. The gradient w.r.t. the logits $\mathbf{q}$ is approximated by averaging over $K$ samples:
\begin{equation}
    \nabla_{\mathbf{q}} \mathcal{L} \approx \frac{1}{K} \sum_{k=1}^K \nabla_{\tilde{\mathbf{p}}^{(k)}} \mathcal{L} \cdot \nabla_{\mathbf{q}} \tilde{\mathbf{p}}^{(k)},
\end{equation}
where $\nabla_{\mathbf{q}} \tilde{\mathbf{p}}^{(k)}$ follows from the softmax Jacobian. Since $\phi$ operates on decoded frames and $\psi$ is frozen, gradients update only the parameters of $\phi$.

\noindent \textbf{Downstream Inference.} At inference, we perform deterministic top-$K$ selection: we select the $K$ frames with highest logits $\mathbf{q}$ to form $\tilde{\mathcal{V}} = \{ \hat{f}_{t} \}_{t \in \mathcal{T}_K}$, where $\mathcal{T}_K = \text{top-}K(\mathbf{q})$. This yields a compact representation with exactly $K$ frames, ensuring minimal computational overhead.

\subsection{Latent Spatial Reconstruction}
\label{subsec:spatial}

Following temporal compression, we optimize the latent codes of the selected frames rather than pixel-level representations to minimize storage overhead. Storing optimized latents requires significantly less space than full-resolution frames while preserving discriminative features. These latent codes are jointly updated with the video understanding model through end-to-end classification training. We optimize three components simultaneously: the latent codes $\mathbf{Z}$, the temporal scorer $\phi$, and the video understanding model $\mathcal{M}_\theta$. The objective is defined as:
\begin{equation}
    \argmin_{\mathbf{Z}, \phi, \theta} \sum_{i=1}^{N} \mathcal{L}_{\text{CE}} \left( \mathcal{M}_\theta \left( \tilde{\mathcal{V}}(\mathbf{Z}, \phi) \right), y \right),
\end{equation}
where $\tilde{\mathcal{V}}$ is the temporally compressed video constructed as in Section~\ref{subsec:temporal}. This single-objective formulation creates a closed optimization loop:
\begin{enumerate}[leftmargin=*,nosep]
    \item $\mathcal{M}_\theta$ learns features from compressed videos under supervision of original hard labels $y$
    \item $\phi$ adapts frame selection to maximize classification accuracy (via gradients through $\tilde{\mathcal{V}}$)
    \item $\mathbf{Z}$ are refined through gradients flowing backward through $\psi$ and $\mathcal{M}_\theta$
\end{enumerate}
Although the temporal scorer operates on decoded frames, gradients from the classification loss $\mathcal{L}_{\text{CE}}$ propagate through $\tilde{\mathcal{V}}$ to the latent codes $\mathbf{Z}$ via the frozen decoder $\psi$. Specifically, for each latent code $\mathbf{z}_{t}$, the gradient $\partial \mathcal{L}_{\text{CE}} / \partial \mathbf{z}_{t}$ adjusts its representation to improve classification performance. Crucially, no auxiliary losses or regularization terms are introduced—all optimization is driven solely by the classification signal. This ensures spatial reconstruction is strictly task-oriented, while the fixed frame budget $k$ inherently enforces temporal compression. The co-optimization directly minimizes the performance gap $\delta(M)$ from Definition~\ref{def:devu} through end-to-end training.
\section{Experiments}
\label{sec:Experiments}

\begin{table*}[tb!]
\vspace{-5pt}
\centering
\caption{Classification accuracy (\%) for data condensation on different video dataset synthesis benchmarks under different ratio settings. All methods use a randomly initialized ConvNet and FrameScore network to guide synthesis, with accuracy measured by training that same ConvNet from scratch on the resulting condensed data.}
\label{tab:main_results}
\vspace{-5pt}
\resizebox{0.99\textwidth}{!}{
\begin{tabular}{@{}cc|cccc|ccccccc|c@{}}
\toprule 
\multirow{2}{*}{Dataset}
 & \multirow{2}{*}{Ratio (\%)} & \multicolumn{4}{c|}{Coreset Selection} & \multicolumn{7}{c|}{Data Synthesis} & \multirow{2}{*}{Full Data} \\ 
 &  & Random & Herding & Forgetting & GraNd & DM+VDSD & MTT+VDSD & MTT & DM & DATM & LVDD & \textbf{VideoCompressa} & \\ 
\midrule
\multirow{3}{*}{HMDB51} 
& 0.21 & 13.49{\scriptsize ±0.08} & 17.26{\scriptsize ±0.51} & 18.82{\scriptsize ±0.24} & 17.30{\scriptsize ±0.48} & 7.20{\scriptsize ±0.32} & 6.94{\scriptsize ±0.11} & 5.99{\scriptsize ±0.33} & 6.29{\scriptsize ±0.10} & 10.01{\scriptsize ±0.36} & 7.78{\scriptsize ±0.29} & \cellcolor{green!10}\textbf{27.78{\scriptsize ±1.14}} & \multirow{3}{*}{\textcolor{black}{28.08{\scriptsize ±0.89}}} \\
& 0.41 & 13.56{\scriptsize ±0.32} & 16.16{\scriptsize ±0.99} & 19.60{\scriptsize ±0.24} & 16.58{\scriptsize ±0.39} & 6.99{\scriptsize ±0.10} & 7.03{\scriptsize ±0.29} & 6.11{\scriptsize ±0.42} & 6.32{\scriptsize ±0.22} & 10.02{\scriptsize ±0.41} & 11.60{\scriptsize ±0.30} & \cellcolor{green!10}\textbf{28.00{\scriptsize ±0.45}} &  \\
& 0.82 & 13.69{\scriptsize ±0.37} & 15.84{\scriptsize ±0.69} & 18.93{\scriptsize ±0.48} & 16.71{\scriptsize ±0.48} & 8.13{\scriptsize ±0.23} & 8.77{\scriptsize ±0.34} & 9.34{\scriptsize ±0.31} & 8.97{\scriptsize ±0.11} & 13.77{\scriptsize ±0.33} & 15.35{\scriptsize ±0.26} & \cellcolor{green!10}\textbf{28.51{\scriptsize ±0.07}} &  \\ 
\midrule
\multirow{3}{*}{UCF101} 
& 0.13 & 17.76{\scriptsize ±1.15} & 13.80{\scriptsize ±0.33} & 13.65{\scriptsize ±0.04} & 13.45{\scriptsize ±0.11} &14.72{\scriptsize$\pm$0.46}
&13.99{\scriptsize$\pm$0.28}
&10.76{\scriptsize$\pm$0.32}
&9.58{\scriptsize$\pm$0.09}
&11.89{\scriptsize$\pm$0.63}
&15.32{\scriptsize$\pm$0.61} & \cellcolor{green!10}\textbf{29.84{\scriptsize ±0.75}} & \multirow{3}{*}{\textcolor{black}{27.60{\scriptsize ±0.62}}} \\
& 0.26 & 17.62{\scriptsize ±0.76} & 13.59{\scriptsize ±0.34} & 14.02{\scriptsize ±0.38} & 13.71{\scriptsize ±0.11} & 13.98{\scriptsize ±0.39} & 14.44{\scriptsize ±0.53} & 11.11{\scriptsize ±0.10} & 9.61{\scriptsize ±0.31} & 12.14{\scriptsize ±0.62} & 16.47{\scriptsize ±0.44} & \cellcolor{green!10}\textbf{34.28{\scriptsize ±0.61}} &  \\
& 0.52 & 18.31{\scriptsize ±0.82} & 13.46{\scriptsize ±0.26} & 13.61{\scriptsize ±0.10} & 14.04{\scriptsize ±0.45} & 14.78{\scriptsize ±0.43} & 18.84{\scriptsize ±0.54} & 14.56{\scriptsize ±0.19} & 12.11{\scriptsize ±0.24} & 14.33{\scriptsize ±0.76} & 19.99{\scriptsize ±0.42} & \cellcolor{green!10}\textbf{34.03{\scriptsize ±0.27}} &  \\ 
\midrule
\multirow{3}{*}{SSv2} 
& 0.72 & 1.26{\scriptsize ±0.10} & 1.12{\scriptsize ±0.16} & 1.95{\scriptsize ±0.21} & 2.12{\scriptsize ±0.32} &4.04{\scriptsize$\pm$0.22}
&4.11{\scriptsize$\pm$0.63}
&5.01{\scriptsize$\pm$0.59}
&3.69{\scriptsize$\pm$0.26}
&5.10{\scriptsize$\pm$0.22}
&4.31{\scriptsize$\pm$0.77}& \cellcolor{green!10}\textbf{8.18{\scriptsize ±0.33}} & \multirow{3}{*}{\textcolor{black}{24.07{\scriptsize ±0.50}}} \\
& 1.45 & 2.07{\scriptsize ±0.11} & 3.18{\scriptsize ±0.33} & 4.17{\scriptsize ±0.32} & 3.94{\scriptsize ±0.51} & 4.10{\scriptsize ±0.35} & 5.99{\scriptsize ±0.50} & 5.12{\scriptsize ±0.46} & 4.01{\scriptsize ±0.11} & 5.02{\scriptsize ±0.48} & 5.99{\scriptsize ±0.22} & \cellcolor{green!10}\textbf{18.82{\scriptsize ±0.60}} &  \\
& 2.90 & 3.39{\scriptsize ±0.32} & 4.31{\scriptsize ±0.12} & 5.17{\scriptsize ±0.40} & 4.63{\scriptsize ±0.33} & 5.01{\scriptsize ±0.20} & 7.92{\scriptsize ±0.29} & 6.68{\scriptsize ±0.33} & 4.42{\scriptsize ±0.49} & 5.97{\scriptsize ±0.69} & 8.03{\scriptsize ±0.45} & \cellcolor{green!10}\textbf{23.97{\scriptsize ±0.71}} &  \\ 
\midrule
\multirow{3}{*}{Kinetics-400} 
& 0.04 & 1.14{\scriptsize ±0.13} & 1.23{\scriptsize ±0.11} & 1.79{\scriptsize ±0.50} & 2.01{\scriptsize ±0.39} & 4.82{\scriptsize ±0.05} & 4.71{\scriptsize ±0.09} & 3.44{\scriptsize ±0.02} & 3.04{\scriptsize ±0.33 }& 4.78{\scriptsize ± 0.33} & 4.48{\scriptsize ±0.75} & \cellcolor{green!10}\textbf{9.96{\scriptsize ±0.59}} & \multirow{3}{*}{\textcolor{black}{29.77{\scriptsize ±0.54}}} \\
& 0.09 & 3.33{\scriptsize ±0.20} & 3.07{\scriptsize ±0.44} & 4.24{\scriptsize ±0.48} & 4.38{\scriptsize ±0.44} & 5.45{\scriptsize ±0.33} & 4.44{\scriptsize ±0.23} & 3.42{\scriptsize ±0.11} & 3.03{\scriptsize ±0.57} & 5.32{\scriptsize ±0.31} & 6.67{\scriptsize ±0.21} & \cellcolor{green!10}\textbf{16.77{\scriptsize ±0.71}} &  \\
& 0.17 & 3.78{\scriptsize ±0.21} & 4.14{\scriptsize ±0.32} & 4.95{\scriptsize ±0.63} & 4.71{\scriptsize ±0.57} & 7.02{\scriptsize ±0.16} & 8.40{\scriptsize ±0.53} & 7.61{\scriptsize ±0.38} & 7.79{\scriptsize ±0.82} & 7.96{\scriptsize ±0.46} & 10.99{\scriptsize ±0.13} & \cellcolor{green!10}\textbf{19.97{\scriptsize ±0.66}} &  \\ 
\bottomrule
\end{tabular}
}
\vspace{-5pt}
\end{table*}

\subsection{Experimental Setup}

\noindent\textbf{Datasets.} We adopted medium-scale video datasets UCF101~\cite{ucf101} and HMDB51~\cite{hmdb51}, as well as large-scale video datasets Kinetics-400~\cite{k400} and Something-Something-V2 (SSv2)~\cite{ssv2} in our experiments. UCF101~\cite{ucf101} consists of 13,320 video clips spanning 101 action categories, while HMDB51~\cite{hmdb51} includes 6849 video clips from 51 action categories. Kinetics-400~\cite{k400} comprises a diverse collection of videos covering 400 human action classes, whereas SSv2 focuses on 174 motion-centric classes with subtle temporal dynamics. We followed previous works to report the top-1 classification accuracy for UCF101 and HMDB51, and the top-5 classification accuracy for Kinetics-400 and SSv2.

\noindent\textbf{Baselines.} We compared our method against two categories of baselines: (i) \textit{selection-based} compression and (ii) \textit{synthesis-based} compression. For selection-based compression, which selects a representative subset of real videos, we included four algorithms: Random, Herding~\citep{herding}, Forgetting~\citep{forgetting}, and GraNd~\citep{grand}. For synthesis-based method, which synthesizes a new, compact dataset, we benchmarked against several state-of-the-art approaches: DM~\citep{DM}, MTT~\citep{MTT}, DATM~\citep{datm}, and LVDD~\citep{lvdd}. To ensure a comprehensive evaluation against the latest advancements, we also integrated the static-dynamic disentanglement strategy from the recent VDSD~\citep{vdsd} with established methods. We also reported results for these hybrid baselines, DM+VDSD and MTT+VDSD, to thoroughly assess the efficacy of different compression strategies.

\noindent\textbf{Models.}
We employed a lightweight 2-layer 2D-CNN with a single linear layer as the frame scorer $f_{\text{scorer}}$, and a 3-layer 2D-CNN for video understanding. The VAE model is implemented using the pre-trained TAESD~\citep{taesd} model, a compact 2D VAE model that occupies only 9MB. For convolutional network experiments, we used the same 2D-CNN for evaluation. For MLLM experiments, we utilized the Qwen2.5-VL-7B~\citep{Qwen2.5-VL} model for evaluation.

\noindent\textbf{Implementation Details.}
All experiments were conducted on 8 NVIDIA A100 GPUs. For the latent code reconstruction, we used the Adam optimizer to update the synthetic latent codes $\mathbf{Z}$ and the scorer $\phi$. The learning rate was set to $1 \times 10^{-2}$ for the latent code and $1 \times 10^{-3}$ for the scorer. To ensure fairness, all experiments were conducted for 3 times. More details are included in Appendix.

\subsection{Results on Convolutional Networks}

\noindent\textbf{Comparison with Baselines.} 
As shown in Table~\ref{tab:main_results},  VideoCompressa approach significantly outperforms all competing methods across all datasets and settings. Specifically,  VideoCompressa surpasses all other methods in classification accuracy on the HMDB51, UCF101, Kinetics-400, and SSv2 datasets for all data compression ratios. A particularly compelling result is that \textit{our VideoCompressa achieves better performance with training on the full data of HMDB51 and UCF101, requiring only 0.82\% and 0.13\% of the respective entire datasets}.

\begin{table}[tb!]
\centering
\caption{Cross-architecture accuracy (\%) on the UCF101 dataset with a 0.13\% data budget. A random initialized ConvNet model is used to guide the iteration of synthetic datasets, which are then evaluated by training different architectures from scratch.}
\vspace{-5pt}
\label{tab:cross_architecture}
\resizebox{0.99\linewidth}{!}{
\begin{tabular}{@{}c|cccc@{}}
\toprule
\multirow{2}{*}{Method} & \multicolumn{4}{c}{Classification Network} \\ 
 & ConvNet & MLP & RNN & GRU \\ 
\midrule
Random & 16.66{\scriptsize $\pm$0.34} & 1.07{\scriptsize $\pm$0.16} & 1.15{\scriptsize $\pm$0.10} & 1.02{\scriptsize $\pm$0.29} \\
Forgetting & 7.31{\scriptsize $\pm$0.30} & 1.25{\scriptsize $\pm$0.12} & 1.24{\scriptsize $\pm$0.16} & 0.85{\scriptsize $\pm$0.17} \\
Herding & 11.45{\scriptsize $\pm$0.35} & 1.04{\scriptsize $\pm$0.08} & 1.05{\scriptsize $\pm$0.12} & 1.31{\scriptsize $\pm$0.21} \\
LVDD & 15.32{\scriptsize $\pm$0.61} & 15.45{\scriptsize $\pm$0.15} & 13.92{\scriptsize $\pm$0.52} & 14.44{\scriptsize $\pm$0.72} \\
\cellcolor{green!10}\textbf{VideoCompressa} 
& \cellcolor{green!10}\textbf{29.84{\scriptsize $\pm$0.75}}
& \cellcolor{green!10}\textbf{23.97{\scriptsize $\pm$0.53}}
& \cellcolor{green!10}\textbf{26.07{\scriptsize $\pm$0.57}}
& \cellcolor{green!10}\textbf{26.15{\scriptsize $\pm$0.32}} 
\\ 
\midrule
Full Data & 27.60{\scriptsize $\pm$0.62} & 24.87{\scriptsize $\pm$0.46} & 25.15{\scriptsize $\pm$0.33} & 26.99{\scriptsize $\pm$0.71} \\
\bottomrule
\end{tabular}
}
\vspace{-5pt}
\end{table}
\noindent\textbf{Cross-Architecture Performance.} We conducted a rigorous cross-architecture generalization experiment, with results presented in Table~\ref{tab:cross_architecture}. Following a standard protocol, we used a randomly initialized ConvNet to guide the reconstruction process. Subsequently, we used the resulting synthetic latent code to generate decoded synthetic dataset. The dataset is used to train a variety of architectures from scratch: a ConvNet, an MLP, an RNN, and a GRU. The results unequivocally demonstrate the superior generalization of the dataset synthesized by VideoCompressa. Our method consistently outperforms all competing approaches across every target architecture, highlighting that synthetic data is not overfitted to the guiding ConvNet but has instead captured more fundamental and transferable data characteristics. The performance gap is substantial; when compared to the strongest baseline for each model family, our approach achieves absolute accuracy gains of +13.18\% on ConvNet, +8.52\% on MLP, +12.15\% on RNN, and +11.71\% on GRU. 

\begin{figure}[tb!]
    \centering
   \includegraphics[width=.99\linewidth]{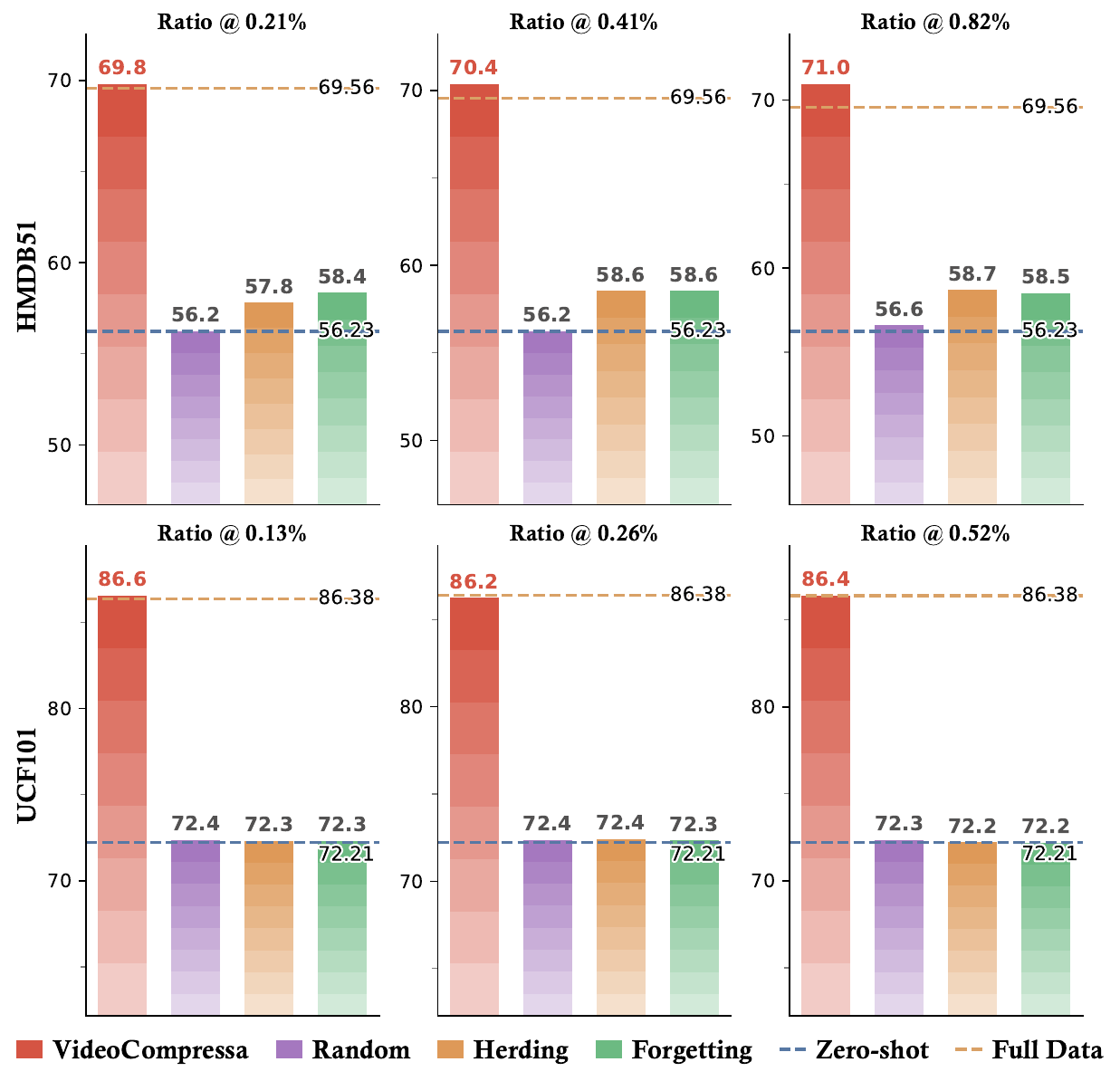} 
    \vspace{-5pt}
    \caption{
       Classification accuracy (\%) for data compression on HMDB51 and UCF101 video benchmarks under different ratio settings. All methods use a randomly initialized ConvNet and frame scorer to guide reconstruction, with accuracy measured by finetuning pre-trained Qwen2.5-VL-7B from scratch on the resulting condensed data.
    }
    \label{fig:main_mllm}
    \vspace{-10pt}
\end{figure}

\noindent\textbf{Computational Efficiency Analysis.}
We further analyzed the computational efficiency of different synthesis-based methods in terms of both compression time and GPU memory usage, as summarized in Figure~\ref{fig:efficiency}. Compared to previous baselines such as DC, DM, and MTT, VideoCompressa achieves a remarkable reduction in resource consumption while maintaining superior accuracy. Specifically, \textit{at a 0.52\% compression ratio on UCF101, our method runs over 5800× faster than DC}, requiring only 20.5 seconds for compression, and consumes merely 24.8 GB of GPU memory—saving up to 45 GB compared to existing approaches. Despite its lightweight cost, VideoCompressa surpasses full-data training by 5.43\% in classification accuracy. These results highlight the scalability and efficiency of our approach for large-scale video dataset reconstruction.

\subsection{Results on Multimodal Large Language Models}
The proposed VideoCompressa framework generalizes effectively to MLLMs, demonstrating strong compression capability and transferability across architectures. As illustrated in Figure~\ref{fig:main_mllm}, our method achieves \textit{lossless compression on MLLM}, surpassing full-dataset Supervised Fine-Tuning (SFT) on both HMDB51 and UCF101 while using merely 0.21\% and 0.13\% of the original training data, respectively. This remarkable result indicates that our distilled data preserve both semantic and temporal richness essential for multimodal understanding. Compared to the previous state-of-the-art baseline, \textit{VideoCompressa achieves an 11.4\% absolute improvement on HMDB51 and an 8.2\% gain on UCF101}, establishing a new performance upper bound under extreme data reduction. These results highlight that our approach not only compresses large-scale video datasets efficiently but also enhances generalization for MLLM-based video understanding tasks.

\subsection{Ablation Study}

\begin{figure}[tb!]
    \centering
    \includegraphics[width=0.99\linewidth]{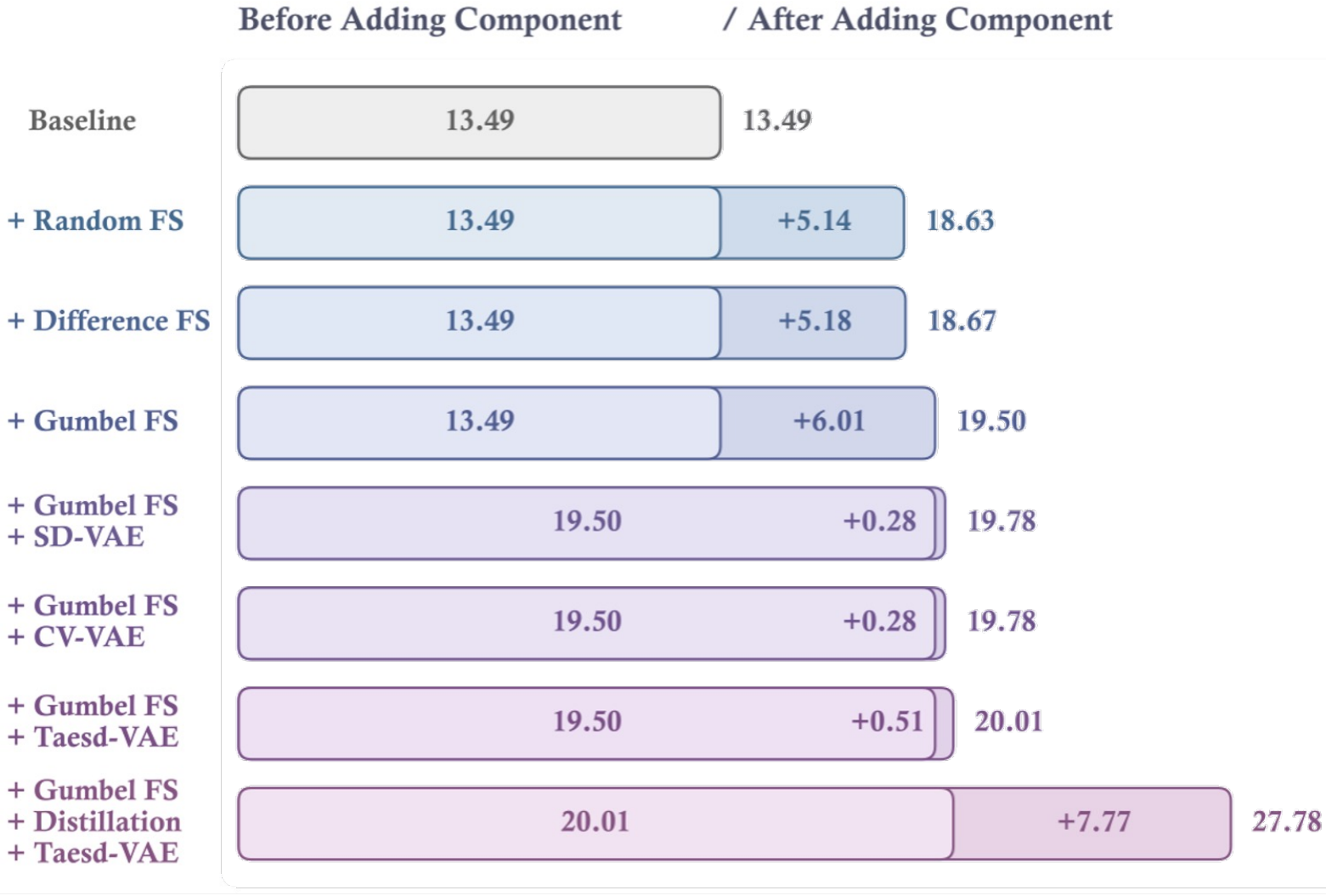} 
    \vspace{-5pt}
    \caption{
Ablation study of individual components. The figure compares the impact of three Frame Selection methods, three VAE variants, and the inclusion of the video understanding module. Experiments are conducted on UCF101 with a 0.13\% data ratio. FS indicates frame selection method.
    }
    \label{fig:ablation}
    \vspace{-5pt}
\end{figure}

We conducted a comprehensive ablation study to meticulously evaluate the contribution of each component within our VideoCompressa framework. The results are illustrated in Figure~\ref{fig:ablation}. Our analysis follows a cumulative design, starting from a random baseline that achieves 13.49\% accuracy.
First, we integrated and compared three different key-frame selection methods. Among them, Gumbel-Max frame selection method provides the most significant improvement, boosting the accuracy by 6.01\% to 19.50\%. Building upon this enhanced model, we then evaluated three VAE variants, where the Taesd-VAE proved most effective, further increasing the accuracy to 20.01\%. Finally, the inclusion of our reconstruction process yields a substantial gain of 7.77\%, bringing the final accuracy to 27.78\%.

\begin{table}[t]
\centering
\caption{
Performance of different key-frame selection strategies on the HMDB51 dataset. To isolate the impact of the selection method itself, the understanding network was disabled, and each strategy selected a fixed budget of 4 frames. Our learnable Gumbel-Max approach significantly outperforms both the no-selection baseline and common heuristic methods.
}
\vspace{-5pt}
\label{tab:ablation_kfs}
\resizebox{0.99\columnwidth}{!}{
\begin{tabular}{c|cccc}
\toprule
Selection Method & None & Uniform & Pixel Diff. & \textbf{Gumbel-Max (Ours)} \\
\midrule
Accuracy 
& 13.49{\scriptsize$\pm$0.08}
& 18.63{\scriptsize$\pm$0.43}
& 18.67{\scriptsize$\pm$0.78}
& \cellcolor{green!10}\textbf{19.50{\scriptsize$\pm$0.82}} \\
\bottomrule
\end{tabular}
}
\vspace{-5pt}
\end{table}

\noindent\textbf{Impact of Key-Frame Selection Strategies.}
We dissect the contribution of the key-frame selection module by conducting an ablation study in a decoupled setting. To strictly isolate the impact of the selection mechanism, we trained the models from scratch using a fixed budget of 4 frames selected by each strategy. We compared our approach against two common heuristics: Uniform Sampling and Pixel Difference-based Selection. As detailed in Table~\ref{tab:ablation_kfs}, the introduction of any structured key-frame selection strategy yields a substantial performance leap over the baseline, confirming that reducing temporal redundancy is critical for data efficiency. Most importantly, our Gumbel-Max strategy achieves the highest accuracy of 19.50\%, surpassing both uniform and difference-based heuristics. It is worth noting that in this setting, the Gumbel selector operates without the supervisory gradients from the training process. The fact that it outperforms handcrafted heuristics even in this decoupled regime validates its intrinsic ability to adaptively identify the most information-rich temporal segments for the task.

\noindent\textbf{Impact of VAE on Computational Efficiency.} 
\begin{table}[tb!]
\centering
\caption{Impact of different VAE architectures on classification accuracy (\%) and computational efficiency on the HMDB51 dataset. To isolate the VAE's contribution, accuracy was evaluated with the synthesis component disabled. Computational efficiency was measured during a single iteration step. For a fair comparison, all configurations select 4 frames using the Gumbel-Max strategy. OOM indicates out-of-memory.}
\vspace{-5pt}
\label{tab:ablation_vae}
\resizebox{0.99\columnwidth}{!}{
\begin{tabular}{l|cccc}
\toprule
VAE Type & No VAE & SD-VAE & CV-VAE & \textbf{Taesd-VAE} \\ 
VAE Size (MB) & - & 335 & 80 & 
\textbf{9} \\
\midrule
Accuracy $\uparrow$
& 19.50
& 19.78
& 19.78
& \cellcolor{green!10}\textbf{20.01} \\
GPU Memory (GB) $\downarrow$& $>$800 & 25.13 & 24.82 & \cellcolor{green!10}\textbf{24.80} \\
Time (s) $\downarrow$& OOM & 388.62 & 57.15 & \cellcolor{green!10}\textbf{22.86} \\
\bottomrule
\end{tabular}
}
\end{table}
To quantify the efficiency gains brought by different VAE designs, we conducted an ablation that replaces the VAE architecture while keeping the rest of the pipeline fixed. As shown in Table~\ref{tab:ablation_vae}, Taesd-VAE significantly improves computational efficiency while preserving accuracy. By compressing videos into a compact latent space, \textit{VAE-based methods reduce the input dimensionality and yield at least $\times$32 reduction in memory consumption}. In particular, \textit{Taesd-VAE achieves $\times$17 speedup over SD-VAE}, making it the most efficient choice without sacrificing performance.

\noindent\textbf{Ablation on coupled effect of the Understanding Network and Gumbel Trick.}
\begin{table}[tb!]
\centering
\caption{Classification accuracy (\%) on HMDB51 under 2 different frame selection methods. For a fair comparison, all experiments were conducted under a fixed
setting that selects 4 frames.}
\vspace{-5pt}
\label{tab:ablation_distill}
\resizebox{0.9\linewidth}{!}{
\begin{tabular}{c|cc}
\toprule
Training Step & Uniform & \textbf{Gumbel Trick (Ours)} \\
\midrule
0 & 18.63{\scriptsize$\pm$0.43} & \cellcolor{green!10}19.50{\scriptsize$\pm$0.82} \\
1 & 18.31{\scriptsize$\pm$0.92} & \cellcolor{green!10}26.27{\scriptsize$\pm$0.38} \\
2 & 18.53{\scriptsize$\pm$1.09} & \cellcolor{green!10}26.47{\scriptsize$\pm$0.46} \\
4 & 19.87{\scriptsize$\pm$0.39} & \cellcolor{green!10}27.78{\scriptsize$\pm$1.14} \\
\bottomrule
\end{tabular}
}
\vspace{-5pt}
\end{table} 
To isolate the contribution of the video understanding network, we evaluated Uniform and Gumbel-Max key-frame selection strategies on HMDB51 under varying numbers of training steps. As shown in Table~\ref{tab:ablation_distill}, the Gumbel Trick frame selector benefits substantially from the presence of the supervisory gradients from the video understanding network, since its parameters are updated through the Gumbel-Softmax. \textit{Notably, Gumbel Trick achieves a 7.91\% accuracy gain at 4 steps}, highlighting the strong coupling between the frame selection network and the understanding component. This further confirms the feasibility of exploiting intra-sample redundancy for video compression and underscores the effectiveness of our proposed framework.

\noindent\textbf{Sensitivity analysis of reconstruction steps.}
\begin{figure}[tb!]
    \centering
    \includegraphics[width=0.99\linewidth]{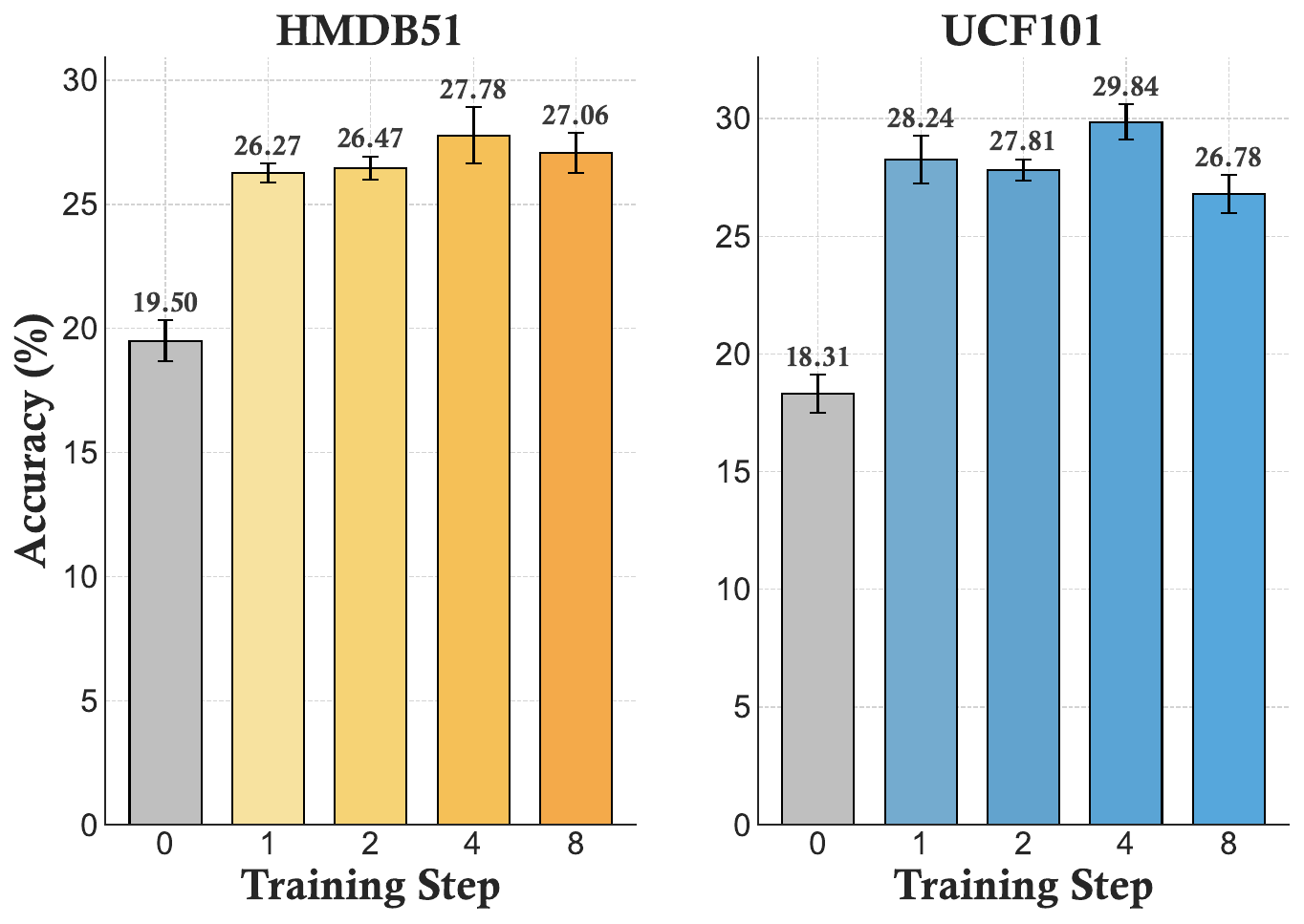}
    \vspace{-5pt}
    \caption{Sensitivity analysis of training steps on HMDB51 and UCF101. All experiments were conducted under a fixed setting: selecting 4 frames with the Gumbel-based strategy.}
    \label{fig:sensitivity}
\vspace{-5pt}
\end{figure}
To assess the robustness of our approach, we performed a sensitivity analysis on both HMDB51 and UCF101. As illustrated in Figure~\ref{fig:sensitivity}, VideoCompressa maintains consistently strong performance across different training steps, demonstrating stable behavior under varying training dynamics.

\section{Discussion}

\subsection{Why Prior Methods Fall Short: Intra-Sample Redundancy Must Be Prioritized Over Inter-Sample Redundancy}

\begin{figure}[tb!]
    \centering
    \includegraphics[width=0.99\linewidth]{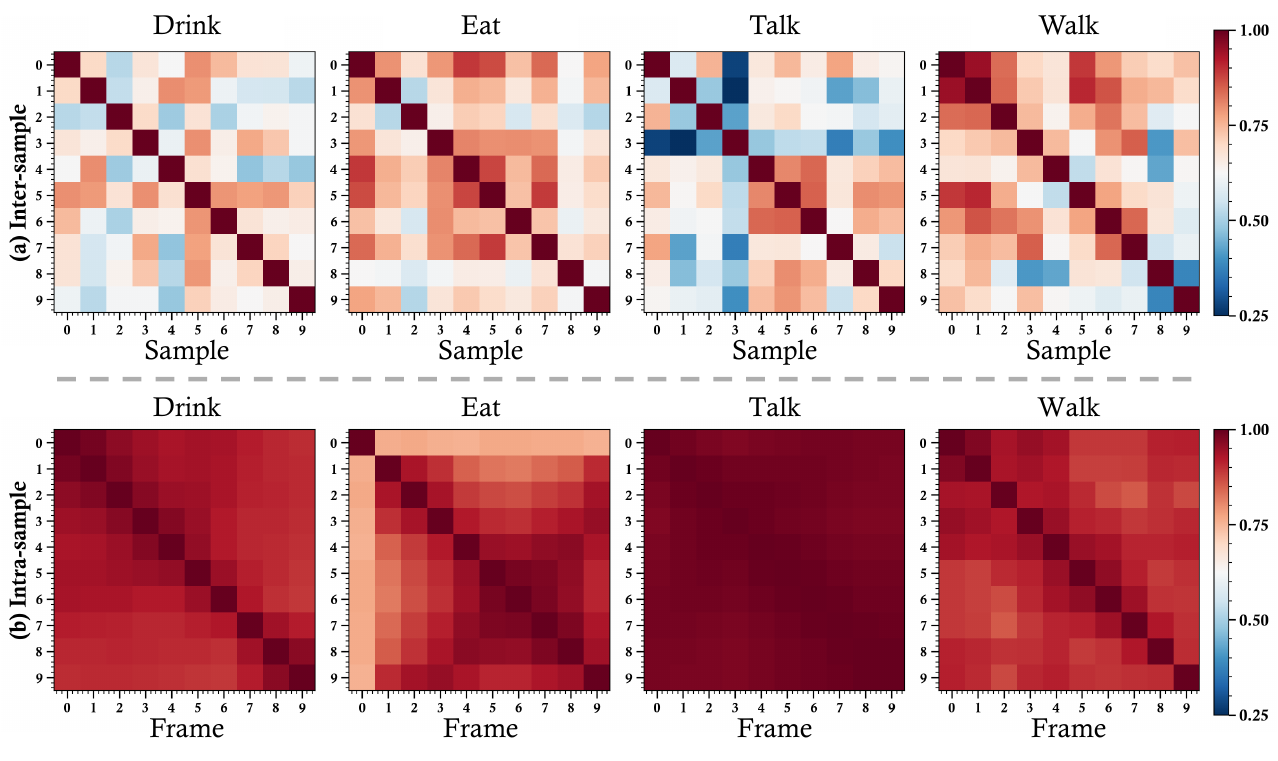}
    \caption{
    Visualization of inter-sample versus intra-sample redundancy for select action classes from the UCF101 dataset. 
    }
    \label{fig:redundancy_analysis}
\vspace{-5pt}
\end{figure}
A critical analysis of prior video dataset compression techniques reveals a fundamental limitation: an overemphasis on resolving \textit{inter-sample} redundancy while largely neglecting the far more significant issue of \textit{intra-sample} redundancy. Many previous approaches, such as coreset selection~\citep{herding,forgetting,grand}, treat each video as an indivisible atomic unit, thereby failing to address the massive data repetition that exists \textit{within} each sample.
This oversight is the principal reason for their suboptimal performance. As shown in Figure~\ref{fig:redundancy_analysis}, The core inefficiency of large-scale video datasets stems not from the similarity between different videos, but from the high temporal correlation between consecutive frames in a single video. The inter-sample correlation matrices show that while videos of the same class are related, there is considerable variance between them, indicated by the mix of high and low correlation values off the main diagonal. Simply discarding entire videos risks losing the diverse expressions of an action.
In stark contrast, the intra-sample matrices demonstrate overwhelmingly high correlation between adjacent frames. The deep red coloring across these heatmaps signifies pervasive redundancy, where consecutive frames often contain nearly identical information. Methods that focus only on inter-sample compression are forced to retain all of this redundant data for any video they select. Consequently, they fail to create a truly dense and efficient data representation. Our work posits that a paradigm shift is necessary: effective video data compression must prioritize the principled selection of informative temporal segments within each video, directly tackling the primary source of data inefficiency.

\subsection{Why Differentiable Frame Selection Works Well: Dynamic Compression Matters}

To elucidate the working principles behind the effectiveness of key-frame selection in video datasets, we conducted a comparative analysis of selection-based and synthesis-based methodologies. 
This investigation was performed on the UCF101 dataset, employing a sampling ratio of 0.13\%. 
We utilized the Uniform Manifold Approximation and Projection (UMAP)~\citep{umap} technique for dimensionality reduction, enabling a 2D visualization of the feature space to assess the data distribution. As illustrated in Figure~\ref{fig:visualization}, the distribution produced by our method, VideoCompressa, most closely mirrors the ground-truth data distribution. This indicates that VideoCompressa achieves a superior trade-off between representativeness and diversity, which we identify as the key principle for its success. For instance, within the \texttt{UnevenBars} category, VideoCompressa's synthesized data points are densely clustered around the distribution's center, effectively capturing the core characteristics of this class. In contrast, the Forgetting method fails to adequately represent this central region. Furthermore, in the \texttt{ApplyEyeMakeup} category, the embedding generated by VideoCompressa demonstrates significant diversity.  The visualization reveals multiple distinct clusters with substantial inter-cluster distances---a topological feature adeptly captured by VideoCompressa but overlooked by the Forgetting method. This observation corroborates previous findings~\citep{Li_2025_ICCV} which suggest that preserving diversity is a critical factor in the performance of dataset synthesis tasks for image-based datasets.

\begin{figure}[tb!]
    \centering
    \includegraphics[width=0.99\linewidth]{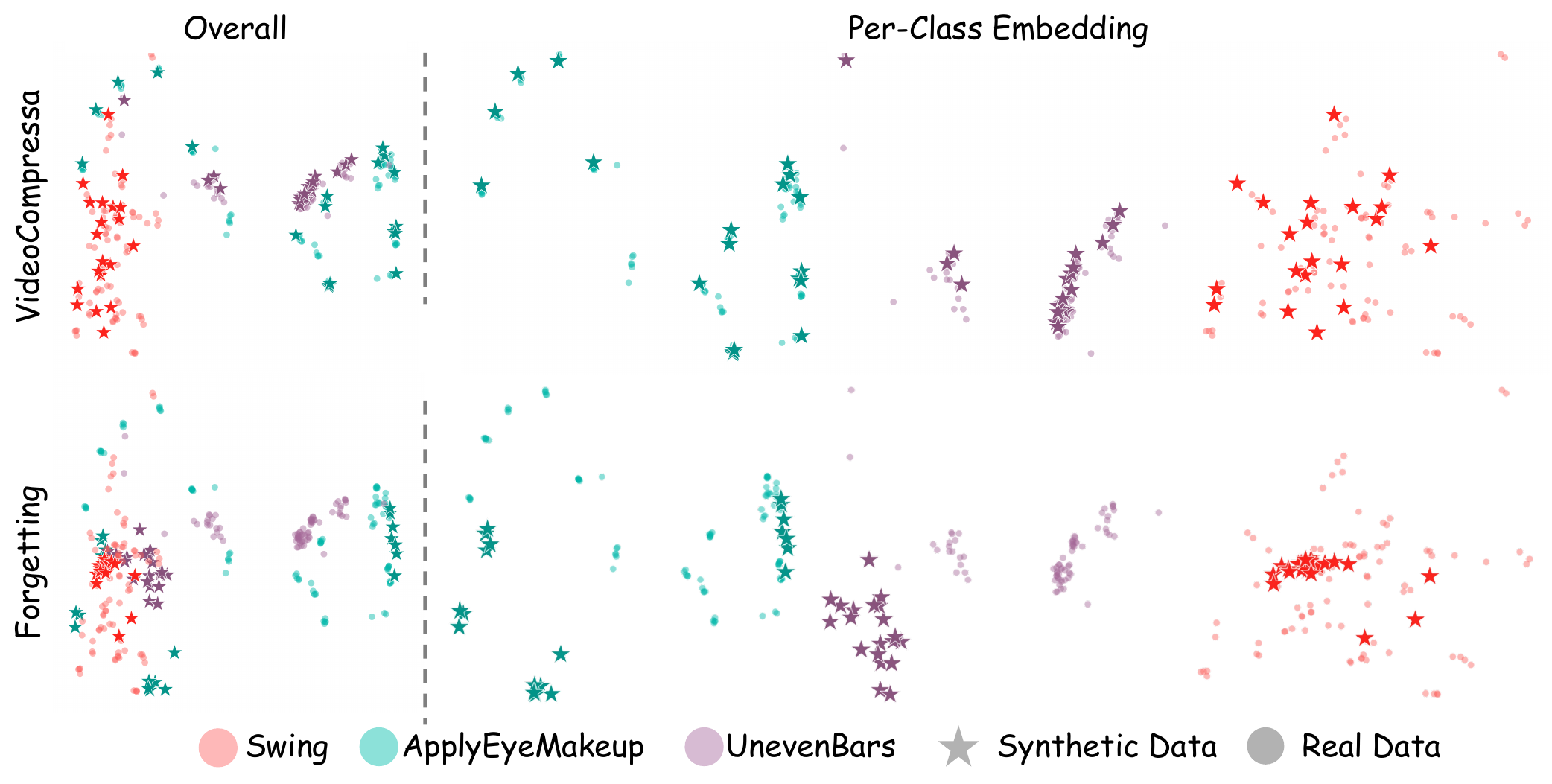}
    \caption{
    UMAP visualization comparing the feature distributions of real data and synthetic data on the UCF101 dataset. }
    \label{fig:visualization}
\vspace{-5pt}
\end{figure}

\section{Qualitative Analysis}
\label{sec:qualitative_analysis}

To rigorously evaluate the effectiveness of VideoCompressa in capturing complex temporal dynamics, we provide a qualitative visualization on the Something-Something V2 (SSv2) dataset. Unlike static image datasets, SSv2 relies heavily on temporal ordering and causal reasoning, making it an ideal testbed for verifying our differentiable keyframe selection mechanism. Figure~\ref{fig:vis_appendix} presents a comprehensive comparison between 16 frames uniformly sampled from the original video, a standard random selection baseline, and the 4-frame sequence reconstructed from our compressed latent codes.

\begin{figure*}[t]
    \centering
    \includegraphics[width=1.0\linewidth]{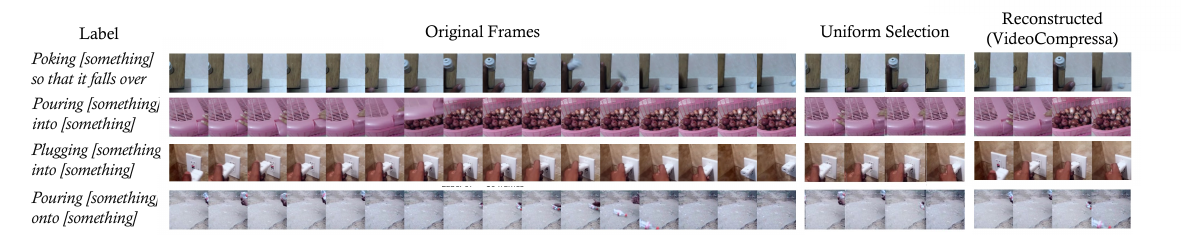}
\caption{Qualitative comparison of temporal compression on Something-Something V2. The figure contrasts 16 uniformly sampled original frames on the left against 4-frame sequences obtained via Uniform Sampling in the middle and our VideoCompressa reconstruction on the right. While Uniform Sampling often yields static snapshots due to rigid striding, our method prioritizes semantically critical moments. For instance, our model captures the causal transition of an object tipping over in the first row and the visual consequence of pouring liquid, specifically the stained ground shown in the fourth row—details entirely missed by the baseline. Additionally, fine-grained interactions in the third row and accumulation dynamics in the second row are faithfully preserved in the reconstructed latent space, verifying that our method retains both temporal logic and spatial semantics.}
    \label{fig:vis_appendix}
\end{figure*}

\noindent\textbf{Superior Temporal Selection.} 
As evidenced by the comparison, Uniform Sampling often misses critical transition states due to its rigid interval selection, as clearly shown in the ``Uniform Selection'' column. In contrast, our Gumbel-based selector demonstrates intelligent budget allocation prioritized by semantics. Specifically, for the action \textit{``Poking [something] so that it falls over''} in the first row, our model explicitly selects the exact moment the object loses balance in the third frame, whereas Uniform Sampling captures only the static pre- and post-action states, effectively severing the causal link of the object's disappearance. Furthermore, in the \textit{``Pouring [something] onto [something]''} example presented in the fourth row, our method retains the frame exhibiting the stained surface—visual evidence of the action's consequence that is essential for distinguishing ``pouring'' from merely ``holding.'' This capability extends to fine-grained interactions as well; as illustrated by the \textit{``Plugging [something] into [something]''} sequence in the third row, our method preserves the precise alignment and insertion phase often lost in strided sampling.

\noindent\textbf{High-Fidelity Spatial Reconstruction.} 
The rightmost column visualizes the decoded output from our optimized latent codes. Despite the significant compression ratio utilizing only about 0.13\% of the original raw data size and the inherent dimensionality reduction of the VAE, the reconstructed frames retain rich semantic details. While minor high-frequency textures may be smoothed, the object identities and motion cues remain distinct. This confirms that VideoCompressa learns to store highly discriminative visual features in the latent space rather than relying on adversarial noise or overfitting to background statistics.

\section{Conclusion}
\label{sec:Conclusion}
In this work, we addressed the critical challenge of prohibitive storage and computational costs associated with large-scale video datasets. We introduced VideoCompressa, a novel dataset synthesis framework that reimagines video compression as a principled temporal information selection problem. Our core innovation lies in the synergy between a differentiable Gumbel keyframe selector and a gradient-guided latent optimization process. This combination enables the model to automatically identify and leverage the most informative frames into a compact, semantically rich latent representation. The success of VideoCompressa opens several exciting avenues for future research. This perspective on ``learning what to ignore'' provides a scalable and interpretable foundation for the next generation of video dataset synthesis. We believe this work will inspire further exploration into dynamic, content-aware data compression, with promising extensions to multimodal learning and the development of large-scale, yet highly efficient, video-language models.


{
    \small
    \bibliographystyle{ieeenat_fullname}
    \bibliography{main}
}
\clearpage
\maketitlesupplementary

\section{Theorm of Gumbel-Based Selection}
\label{sec:appendix_gumbel_proof}

This section provides the theoretical foundation for our differentiable keyframe selection mechanism. We first provide a formal proof for the Gumbel-Max trick (Theorem~\ref{prop:gumbel_max}), demonstrating that adding i.i.d.\ Gumbel noise to logits and taking the index of the maximum element is equivalent to sampling from the categorical distribution. Subsequently, we analyze the gradient approximation (Theorem~\ref{prop:gradient_bound}), showing that the gradients derived from our Gumbel-Softmax relaxation serve as a bounded approximation to the gradients of the discrete Straight-Through Estimator, thereby justifying the consistency of our end-to-end training strategy.

\begin{theorem}[The Gumbel-Max Trick]
\label{prop:gumbel_max}
Let $q_1, \dots, q_K$ be a set of logits. Let $g_1, \dots, g_K$ be i.i.d. random variables from the standard Gumbel distribution. Then:
\[
P\left(j = \arg\max_{i} (q_i + g_i)\right) = \frac{\exp(q_j)}{\sum_{i=1}^K \exp(q_i)}
\]
\end{theorem}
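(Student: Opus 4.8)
The plan is to compute the probability $P(j = \arg\max_i (q_i + g_i))$ directly by integrating over the value of the maximizing coordinate $q_j + g_j$ and requiring every other coordinate to fall below it. First I would recall the two defining facts about the standard Gumbel distribution: its CDF is $F(x) = \exp(-e^{-x})$ and its PDF is $f(x) = e^{-x}\exp(-e^{-x})$. The event $\{j = \arg\max_i(q_i+g_i)\}$ is, up to a measure-zero set of ties, the event that $q_j + g_j > q_i + g_i$ for all $i \neq j$, i.e. $g_i < q_j + g_j - q_i$ for all $i \neq j$.

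The key steps, in order, are: (1) Condition on $g_j = t$ and write the conditional probability that all other coordinates are dominated as a product, using independence: $\prod_{i \neq j} P(g_i < q_j + t - q_i) = \prod_{i \neq j} \exp(-e^{-(q_j + t - q_i)})$. (2) Integrate against the density of $g_j$: the target probability becomes $\int_{-\infty}^{\infty} e^{-t}\exp(-e^{-t}) \prod_{i \neq j} \exp(-e^{-(q_j + t - q_i)})\, dt$. (3) Combine the exponentials: the product of all the $\exp(-e^{-t}e^{q_i - q_j})$ terms (including the $i = j$ term coming from the density, which contributes $\exp(-e^{-t})$ with $e^{q_j - q_j} = 1$) collapses to $\exp\!\big(-e^{-t}\sum_{i=1}^K e^{q_i - q_j}\big)$. (4) Substitute $u = e^{-t}$, so $du = -e^{-t}\,dt$, turning the integral into $\int_0^\infty \exp(-u S)\, du$ where $S = \sum_{i=1}^K e^{q_i - q_j}$, which evaluates to $1/S$. (5) Finally, unwind $1/S = 1/\sum_i e^{q_i - q_j} = e^{q_j}/\sum_i e^{q_i}$, which is exactly the claimed softmax probability.

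I expect the only real obstacle to be bookkeeping in step (3): one must be careful to fold the $\exp(-e^{-t})$ factor from the Gumbel density of $g_j$ into the same product as the domination terms, recognizing it as the ``$i=j$ summand,'' so that the sum in the exponent runs over all $K$ indices rather than $K-1$. Everything else is a routine Gumbel integral. I would also add a remark that ties occur with probability zero since the $g_i$ are continuous, so the $\arg\max$ is almost surely unique and the computed probability is the genuine selection probability. This establishes that injecting i.i.d.\ Gumbel noise and taking the argmax samples exactly from $\mathrm{Categorical}(p_1,\dots,p_K)$ with $p_j = \exp(q_j)/\sum_i \exp(q_i)$, which is the basis for the differentiable relaxation used in the temporal compression module.
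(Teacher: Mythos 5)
Your proposal is correct and follows essentially the same route as the paper's proof: condition on the value of $g_j$, factor the domination event into a product of Gumbel CDFs by independence, absorb the density's $\exp(-e^{-t})$ factor as the $i=j$ summand so the exponent becomes $-e^{-t}\sum_{i=1}^K e^{q_i-q_j}$, and evaluate the resulting integral via the substitution $u=e^{-t}$ to obtain the softmax probability. Your added remark that ties occur with probability zero is a small but welcome refinement the paper omits.
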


\begin{proof}
The standard Gumbel distribution is defined by its CDF, $F(x) = \exp(-\exp(-x))$, and its PDF, $f(x) = \exp(-x - \exp(-x))$.

Let $k = \arg\max_{i} (q_i + g_i)$. The event $k=j$ occurs if and only if $q_j + g_j$ is greater than all other $q_i + g_i$ for $i \neq j$. We compute this probability by integrating over all possible values of $g_j$. Let $g_j = x$:
\begin{align*}
    P(k=j) &= P(q_j + g_j \ge q_i + g_i, \; \forall i \neq j) \\
           &= \int_{-\infty}^{\infty} P(g_i \le x + q_j - q_i, \; \forall i \neq j) f(x) dx
\end{align*}
Since the $g_i$ are i.i.d., the joint probability inside the integral can be factored into a product of individual CDFs:
\begin{align*}
    P(k=j) &= \int_{-\infty}^{\infty} \left( \prod_{i \neq j} F(x + q_j - q_i) \right) f(x) dx \\
           &= \int_{-\infty}^{\infty} \left( \prod_{i \neq j} \exp(-\exp(-(x + q_j - q_i))) \right) \\
           & \qquad \cdot \exp(-x - \exp(-x)) dx
\end{align*}
The product of exponentials can be rewritten as the exponential of a sum. This is the step where we break the long equation for a two-column layout.
\begin{align*}
    P(k=j) &= \int_{-\infty}^{\infty} \exp\left( -\sum_{i \neq j} \exp(-(x + q_j - q_i)) \right) \\
           & \qquad \cdot \exp(-x) \exp(-\exp(-x)) dx \\
           &= \int_{-\infty}^{\infty} \exp(-x) \exp\left( -\exp(-x) \right. \\
           & \qquad \left. \cdot \left[ 1 + \sum_{i \neq j} \exp(q_i - q_j) \right] \right) dx
\end{align*}
The term in the brackets can be simplified by noting that $1 = \exp(q_j - q_j)$:
\[
    1 + \sum_{i \neq j} \exp(q_i - q_j) = \sum_{i=1}^K \exp(q_i - q_j)
\]
Let $C = \sum_{i=1}^K \exp(q_i - q_j)$. This constant is independent of $x$. The integral simplifies to:
\[
    P(k=j) = \int_{-\infty}^{\infty} \exp(-x) \exp(-C \cdot \exp(-x)) dx
\]
We solve this using the substitution $u = C \cdot \exp(-x)$. This gives $du = -C \cdot \exp(-x) dx$, so $\exp(-x)dx = -\frac{1}{C}du$. The integration limits change from $(-\infty, \infty)$ for $x$ to $(\infty, 0)$ for $u$.
\begin{align*}
    P(k=j) &= \int_{\infty}^{0} \exp(-u) \left(-\frac{1}{C}\right) du \\
           &= \frac{1}{C} \int_{0}^{\infty} \exp(-u) du = \frac{1}{C} [-\exp(-u)]_0^\infty \\
           &= \frac{1}{C} (0 - (-1)) = \frac{1}{C}
\end{align*}
Substituting the definition of $C$ back, we get:
\begin{align*}
    P(k=j) &= \frac{1}{\sum_{i=1}^K \exp(q_i - q_j)} \\
           &= \frac{\exp(q_j)}{\exp(q_j)\sum_{i=1}^K \exp(q_i - q_j)} \\
           &= \frac{\exp(q_j)}{\sum_{i=1}^K \exp(q_i)}
\end{align*}
This is the softmax probability of the $j$-th category, which completes the proof.
\end{proof}

\begin{theorem}[Gradient Approximation Bound]
\label{prop:gradient_bound}
Let $\mathbf{q}$ be the logit vector and $\mathbf{g}$ be the Gumbel noise vector. We define the hard selection vector $\mathbf{y}^h$ via Gumbel-Max and the soft selection vector $\mathbf{y}^s$ (via Gumbel-Softmax with temperature $\tau$) as:
\[
\mathbf{y}^h = \text{one\_hot}\left(\arg\max(\mathbf{q}+\mathbf{g})\right), \quad \mathbf{y}^s = \text{softmax}\left(\frac{\mathbf{q}+\mathbf{g}}{\tau}\right)
\]
Assume the downstream loss function $\mathcal{L}$ is $L$-Lipschitz smooth with respect to the selection weights. The error between the gradient of our method ($\mathbf{g}_{GS}$) and the Straight-Through Estimator gradient ($\mathbf{g}_{ST}$) is bounded by a linear function of $\tau$:
\[
\| \mathbf{g}_{GS} - \mathbf{g}_{ST} \| \le C \cdot \tau
\]
where $C$ is a constant independent of $\tau$.
\end{theorem}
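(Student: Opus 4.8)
The plan is to write both gradients through their common chain-rule factorization and then bound the single factor in which they differ. Let $\mathbf{y}^s = \mathrm{softmax}((\mathbf{q}+\mathbf{g})/\tau)$, whose Jacobian with respect to $\mathbf{q}$ is $J_\tau = \tfrac{1}{\tau}\bigl(\mathrm{diag}(\mathbf{y}^s) - \mathbf{y}^s(\mathbf{y}^s)^\top\bigr)$. The Gumbel-Softmax gradient we actually propagate is $\mathbf{g}_{GS} = J_\tau^\top \nabla_{\mathbf{y}}\mathcal{L}(\mathbf{y}^s)$, the exact gradient of $\mathcal{L}$ evaluated at the relaxed weights, whereas the Straight-Through estimator performs the forward pass with the hard vector $\mathbf{y}^h$ but reuses the same softmax Jacobian on the backward pass, $\mathbf{g}_{ST} = J_\tau^\top \nabla_{\mathbf{y}}\mathcal{L}(\mathbf{y}^h)$. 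Subtracting, the Jacobian factors out:
\[
\mathbf{g}_{GS} - \mathbf{g}_{ST} = J_\tau^\top\bigl(\nabla_{\mathbf{y}}\mathcal{L}(\mathbf{y}^s) - \nabla_{\mathbf{y}}\mathcal{L}(\mathbf{y}^h)\bigr),
\]
so that $\|\mathbf{g}_{GS} - \mathbf{g}_{ST}\| \le \|J_\tau\|_{\mathrm{op}}\,\|\nabla_{\mathbf{y}}\mathcal{L}(\mathbf{y}^s) - \nabla_{\mathbf{y}}\mathcal{L}(\mathbf{y}^h)\|$ by submultiplicativity of the operator norm.

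Next I would bound the two factors separately. For the Jacobian, $\mathrm{diag}(\mathbf{y}^s) - \mathbf{y}^s(\mathbf{y}^s)^\top$ is the covariance matrix of a categorical distribution, hence positive semidefinite with trace $1-\|\mathbf{y}^s\|^2 \le 1$, so $\|J_\tau\|_{\mathrm{op}} \le 1/\tau$ (this can be sharpened to $1/(2\tau)$, but the looser bound suffices). For the gradient-difference factor, the $L$-Lipschitz smoothness of $\mathcal{L}$ gives $\|\nabla_{\mathbf{y}}\mathcal{L}(\mathbf{y}^s) - \nabla_{\mathbf{y}}\mathcal{L}(\mathbf{y}^h)\| \le L\,\|\mathbf{y}^s - \mathbf{y}^h\|$, which reduces the whole problem to estimating how fast the relaxation collapses onto the one-hot vertex as $\tau\downarrow 0$.

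The crucial step is the estimate on $\|\mathbf{y}^s - \mathbf{y}^h\|$. Let $\Delta$ be the gap between the largest and second-largest entries of $\mathbf{q}+\mathbf{g}$; since the Gumbel distribution is continuous, $\Delta>0$ almost surely. Writing $j^\star$ for the $\arg\max$ index, the non-selected softmax coordinates satisfy $\sum_{i\ne j^\star} y^s_i = 1 - y^s_{j^\star} \le (K-1)\,e^{-\Delta/\tau}$, whence $\|\mathbf{y}^s - \mathbf{y}^h\| \le \|\mathbf{y}^s - \mathbf{y}^h\|_1 = 2(1-y^s_{j^\star}) \le 2(K-1)e^{-\Delta/\tau}$. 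Combining the three bounds yields $\|\mathbf{g}_{GS}-\mathbf{g}_{ST}\| \le \tfrac{2(K-1)L}{\tau}e^{-\Delta/\tau}$, and since $s\mapsto s^2 e^{-\Delta s}$ is maximized at $s=2/\Delta$ with value $4/(e^2\Delta^2)$, we get $e^{-\Delta/\tau}/\tau = \tau\cdot(e^{-\Delta/\tau}/\tau^2) \le 4\tau/(e^2\Delta^2)$ for every $\tau>0$. This gives the claim with $C = 8(K-1)L/(e^2\Delta^2)$, independent of $\tau$.

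The main obstacle is not any individual inequality but pinning down the convention that makes the statement literally true: the Straight-Through estimator must be taken in its standard \emph{hard forward, soft backward} form so that it shares the Jacobian $J_\tau$ with the relaxed estimator, since otherwise $\mathbf{g}_{ST}$ carries no $\tau$-dependence and the difference need not shrink. A secondary subtlety is that the constant $C$ inherits a dependence on the random gap $\Delta$; the clean statement should be read either as holding on the almost-sure event $\{\Delta>0\}$ with a sample-dependent $C$, or after a mild lower truncation of $\Delta$ (the Gumbel tails make $\mathbb{P}(\Delta<\epsilon)$ small, so an expected-gradient version follows by controlling $\mathbb{E}[1/\Delta^2]$). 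The remaining pieces — the Jacobian operator-norm bound, Lipschitz smoothness, and the softmax tail estimate — are routine.
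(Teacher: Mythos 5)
Your proposal is correct and follows the same skeleton as the paper's proof: both write $\mathbf{g}_{GS}-\mathbf{g}_{ST}$ as the difference of loss gradients evaluated at $\mathbf{y}^s$ versus $\mathbf{y}^h$ multiplied by the \emph{shared} soft Jacobian (the paper also carries an extra bounded factor $\partial\mathbf{q}/\partial\phi$, which you drop but which only rescales the constant), and both then invoke $L$-smoothness to reduce the problem to bounding $\|\mathbf{y}^s-\mathbf{y}^h\|$. Where you genuinely diverge is in how the two factors are controlled, and your version is the more rigorous one. The paper simply asserts that $C_2=\sup\|\tfrac{\partial\mathbf{y}^s}{\partial\mathbf{q}}\tfrac{\partial\mathbf{q}}{\partial\phi}\|$ is "finite in practice" and independent of $\tau$, ignoring the explicit $1/\tau$ prefactor in the softmax Jacobian $J_\tau=\tfrac{1}{\tau}(\mathrm{diag}(\mathbf{y}^s)-\mathbf{y}^s(\mathbf{y}^s)^\top)$; taken literally, $\|J_\tau\|_{\mathrm{op}}\le 1/\tau$ combined with the paper's claimed $\|\mathbf{y}^s-\mathbf{y}^h\|\le C_1\tau$ would only yield an $O(1)$ bound, so the paper's argument has a gap that is only closed by the exponential collapse of the covariance term as $\tau\to 0$. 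You supply exactly that missing ingredient: the tail estimate $\|\mathbf{y}^s-\mathbf{y}^h\|\le 2(K-1)e^{-\Delta/\tau}$ in terms of the logit gap $\Delta$, followed by the elementary optimization of $s^2e^{-\Delta s}$, which absorbs the $1/\tau$ and recovers a genuinely linear bound with the explicit constant $C=8(K-1)L/(e^2\Delta^2)$. You also correctly flag the two points the paper glosses over: the result requires the "hard forward, soft backward" convention for the ST estimator (otherwise there is no shared Jacobian and no $\tau$-dependence to exploit), and the constant necessarily depends on the almost-surely positive random gap $\Delta$ (the paper's "fixed set of distinct logits"), so the clean statement holds samplewise or after truncating/integrating over $\Delta$. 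In short, your route buys an explicit, $\tau$-uniform constant and repairs the paper's unjustified uniform boundedness assumption, at the cost of a sample-dependent $\Delta$ in the constant.
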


\begin{proof}
The two gradients with respect to the scorer parameters $\phi$ are defined as:
\begin{align}
    \mathbf{g}_{GS} &= \nabla_{\phi} \mathcal{L}(\mathbf{y}^s) = \frac{\partial \mathcal{L}}{\partial \mathbf{y}^s} \frac{\partial \mathbf{y}^s}{\partial \mathbf{q}} \frac{\partial \mathbf{q}}{\partial \phi} \\
    \mathbf{g}_{ST} &= \nabla_{\phi}^{STE} \mathcal{L} \approx \frac{\partial \mathcal{L}}{\partial \mathbf{y}^h} \frac{\partial \mathbf{y}^s}{\partial \mathbf{q}} \frac{\partial \mathbf{q}}{\partial \phi}
\end{align}
Note that STE uses the hard sample $\mathbf{y}^h$ for the forward pass (loss computation) but approximates the backward pass using the soft Jacobian $\frac{\partial \mathbf{y}^s}{\partial \mathbf{q}}$.
The difference in gradients is:
\[
    \| \mathbf{g}_{GS} - \mathbf{g}_{ST} \| = \left\| \left( \frac{\partial \mathcal{L}}{\partial \mathbf{y}^s} - \frac{\partial \mathcal{L}}{\partial \mathbf{y}^h} \right) \frac{\partial \mathbf{y}^s}{\partial \mathbf{q}} \frac{\partial \mathbf{q}}{\partial \phi} \right\|
\]
Using the Cauchy-Schwarz inequality and the sub-multiplicativity of norms:
\[
    \| \mathbf{g}_{GS} - \mathbf{g}_{ST} \| \le \left\| \frac{\partial \mathcal{L}}{\partial \mathbf{y}^s} - \frac{\partial \mathcal{L}}{\partial \mathbf{y}^h} \right\| \cdot \left\| \frac{\partial \mathbf{y}^s}{\partial \mathbf{q}} \frac{\partial \mathbf{q}}{\partial \phi} \right\|
\]
By the $L$-Lipschitz smoothness assumption of $\mathcal{L}$, the first term is bounded by the distance between the soft and hard samples:
\[
    \left\| \frac{\partial \mathcal{L}}{\partial \mathbf{y}^s} - \frac{\partial \mathcal{L}}{\partial \mathbf{y}^h} \right\| \le L \| \mathbf{y}^s - \mathbf{y}^h \|
\]
It is a known property of the Softmax function that $\mathbf{y}^s(\tau)$ converges to $\mathbf{y}^h$ as $\tau \to 0$. Specifically, for a fixed set of distinct logits, the approximation error is bounded by $\mathcal{O}(\tau)$. Thus, there exists a constant $C_1$ such that $\| \mathbf{y}^s - \mathbf{y}^h \| \le C_1 \tau$.

Let $C_2 = \sup \| \frac{\partial \mathbf{y}^s}{\partial \mathbf{q}} \frac{\partial \mathbf{q}}{\partial \phi} \|$ be the upper bound of the gradient magnitude, which is finite in practice due to bounded activations and weights.
Combining these inequalities:
\[
    \| \mathbf{g}_{GS} - \mathbf{g}_{ST} \| \le (L \cdot C_1 \cdot \tau) \cdot C_2 = (L C_1 C_2) \cdot \tau
\]
Setting $C = L C_1 C_2$, we obtain $\| \mathbf{g}_{GS} - \mathbf{g}_{ST} \| \le C \cdot \tau$.
\end{proof}

\section{Pseudo Code of VideoCompressa}
Our method, VideoCompressa, reframes video data synthesis as a dynamic latent compression problem for data-efficient video understanding. As shown in the Algorithm~\ref{alg:ours}, the algorithm consists of three stages. In the Latent Code Initialization stage, each video frame in the real dataset is encoded into a compact latent representation using a pretrained and frozen VAE, shifting optimization from pixel space to a structured latent space. In the Joint Optimization stage, we jointly train the latent codes, a frame scorer, and a video understanding network. Latent codes are decoded, scored, and softly sampled via the Gumbel-Softmax trick, which enables gradients from the classification loss to update all components end-to-end, including the selected latent codes. In the Deterministic Keyframe Extraction stage, the trained scorer selects the top-$K$ most informative frames for each video. The corresponding optimized latent codes form the final compressed synthetic dataset, which is significantly smaller than the original data but highly informative for downstream tasks.

\begin{algorithm}[t]
\caption{VideoCompressa: Joint Temporal Compression and Spatial Reconstruction}
\label{alg:ours}
\begin{algorithmic}[0]
\State \textbf{Input:} Real dataset $\mathcal{D}_{\text{train}} = \{(V_i, y_i)\}_{i=1}^N$, pretrained VAE $(\mathcal{E},\psi)$, number of keyframes $K$, random initialized Frame scorer $\phi$, Understanding network $\mathcal{M}_\theta$.
\State \textbf{Learnable Variables:} Latent codes $\{Z_i\}$, scorer parameters $\phi$, Understanding network parameters $\theta$.
\Statex \textcolor{magenta}{\textbf{// Stage 1: Latent Code Initialization}}
\For{each video $V_i = \{f_{i,t}\}_{t=1}^{T_i}$ in $\mathcal{D}_{\text{train}}$}

    \State $Z_i \gets \{\mathcal{E}(f_{i,t})\}$
\EndFor

\Statex \textcolor{magenta}{\textbf{// Stage 2: Gumbel-Based Temporal Compression}}
\While{not converged}
    \State Sample mini-batch $\mathcal{B}$.
    \For{each $i \in \mathcal{B}$}
        \State $\hat{V}_i = \{\psi(z_{i,t})\}_{t=1}^{T_i}$ 
        \State $q_i = \phi(\hat{V}_i)$ 
        \For{$k = 1, \dots, K$}
            \State Sample Gumbel noise $g_i^{(k)}$
            \State $p_i^{(k)} \gets \mathrm{Softmax}\big((q_i + g_i^{(k)}) / \tau\big)$
            \State $\tilde f_i^{(k)} \gets \sum_{t=1}^{T_i} p_{i,t}^{(k)} \hat f_{i,t}$
        \EndFor
        \State $\tilde{V}_i \gets \{\tilde f_i^{(k)}\}_{k=1}^K$
        \State $\hat y_i = \mathcal{M}_\theta(\tilde{V}_i)$
    \EndFor
    \State Compute loss $\mathcal{L} = \frac{1}{|\mathcal{B}|}\sum_{i \in \mathcal{B}} \mathrm{CE}(\hat y_i, y_i)$
    \State Update $\{Z_i\}$, $\phi$, $\theta$ using gradients from $\mathcal{L}$
\EndWhile

\Statex \textcolor{magenta}{\textbf{// Stage 3: Latent Spatial Reconstruction}}
\For{each video $i$}
    \State $\hat{V}_i = \{\psi(z_{i,t})\}_{t=1}^{T_i}$
    \State $q_i = \phi(\hat{V}_i)$
    \State $T_i^K \gets \mathrm{top\mbox{-}K}(q_i)$
    \State $Z_i^* \gets \{z_{i,t}\}_{t \in T_i^K}$ 
\EndFor
\State \textbf{Output:} Synthetic latent dataset $\mathcal{S}^* = \{Z_i^*\}_{i=1}^N$.

\end{algorithmic}
\end{algorithm}

\section{Implementation Details}

\noindent\textbf{Latent Code Initialization.} 
To initialize the latent representations, we employed a pre-trained Taesd-VAE encoder. All video frames were processed in batches of 8. To ensure a compact latent space, the VAE was configured with channel pruning on 30\% of its parameters and Tucker/PCA compression at a ratio of 0.75, which further reduced the dimensionality of the initial latent codes.

\noindent\textbf{Frame Scorer and Gumbel-Softmax Parameters.} 
The frame scorer is a lightweight convolutional network designed to produce a scalar importance logit for each frame. In our experiments, this module consisted of two $3\times3$ convolutional layers followed by a linear projection layer. For end-to-end training, we utilized the Gumbel-Softmax technique to enable differentiable sampling of keyframes. The Gumbel-Softmax distribution was controlled by a temperature parameter set to $\tau=1.0$. The frame scorer network was optimized using an Adam optimizer with a learning rate of $1\times10^{-3}$.

\noindent\textbf{Synthetic Latent Code Optimization.}
The synthetic latent codes were optimized jointly with the frame scorer and the downstream video understanding network, guided solely by the cross-entropy classification loss. During each training iteration, the latent codes were updated for 4 optimization steps using a batch size of 64. We used an Adam optimizer with a learning rate of $1\times10^{-2}$ for this process.

\noindent\textbf{Evaluation of the synthesized dataset.} 
To assess the quality of the synthesized dataset, we trained a new instance of the video understanding network from scratch using the distilled latent representations. This network was trained for 100 epochs using Stochastic Gradient Descent (SGD). The training hyperparameters were set as follows: a learning rate of 0.01, a batch size of 256, a momentum of 0.9, and a weight decay of $5\times10^{-4}$.

\section{More Experiments of Sensitivity Analysis}
\begin{table}[t]
\vspace{-5pt}
\centering
\caption{
Sensitivity analysis of the number of distillation steps on the HMDB51 and UCF101 datasets. The results demonstrate that our method converges rapidly, achieving a significant performance gain after the first step and peaking after four steps. For this analysis, all configurations use the Gumbel-Max key-frame selector with a 4-frame budget.
}
\vspace{-5pt}
\label{tab:ablation_hmdb_step}
\resizebox{0.99\linewidth}{!}{
\begin{tabular}{c|ccccc}
\toprule
\textbf{Training step} & 0 & 1 & 2 & \textbf{4} & 8 \\ 
\midrule
HMDB51 & 19.50 ± 0.82 & 26.27 ± 0.38 & 26.47 ± 0.46 & \textbf{27.78 ± 1.14} & 27.06 ± 0.81 \\
UCF101 & 18.31 ± 0.82 & 28.24 ± 1.01 & 27.81 ± 0.46 & \textbf{29.84 ± 0.75} & 26.78 ± 0.81 \\
\bottomrule
\end{tabular}
}
\vspace{-5pt}
\end{table}
To investigate the impact of the number of distillation steps on our method's performance, we conducted a sensitivity analysis on the HMDB51 and UCF101 datasets. Table~\ref{tab:ablation_hmdb_step} presents the classification accuracy as a function of the number of optimization steps, where all experiments utilize the Gumbel-Max key-frame selector with a 4-frame budget.

The results clearly demonstrate that our framework converges remarkably fast. A significant performance gain is achieved after just the first distillation step for both datasets. The accuracy continues to improve and peaks at four steps, reaching 27.78\% on HMDB51 and 29.84\% on UCF101. Interestingly, extending the process to eight steps leads to a slight degradation in performance, suggesting that four steps are sufficient to achieve optimal results while avoiding potential overfitting to the guiding network. This rapid convergence underscores the efficiency of our joint optimization strategy in learning a compact and effective data representation.

\end{document}